\def\eqref#1{equation~\ref{#1}}
\def\1{\bm{1}}
\DeclareMathAlphabet{\mathsfit}{\encodingdefault}{\sfdefault}{m}{sl}
\SetMathAlphabet{\mathsfit}{bold}{\encodingdefault}{\sfdefault}{bx}{n}
\newcommand{\E}{\mathbb{E}}
\DeclareMathOperator*{\argmax}{arg\,max}
\DeclareMathOperator*{\argmin}{arg\,min}
\crefname{algocf}{Algorithm}{Algorithms}
\Crefname{algocf}{Algorithm}{Algorithms}
\crefname{lemma}{lemma}{lemmas}
\Crefname{lemma}{Lemma}{Lemmas}
\newcommand{\ip}[1]{\left\langle #1 \right\rangle}
\newcommand{\bias}[1]{\mathbf{Bias}_{#1}}
\newcommand{\barS}{\bar S}
\newcommand{\Agraph}{\mathcal{A}_G}
\newcommand{\artifacturl}{\url{https://github.com/RuiyuanHuang/graphical-cross-learning-review-artifact}}
\newcommand{\artifactlink}{\href{https://github.com/RuiyuanHuang/graphical-cross-learning-review-artifact}{GitHub repository}}
\newcommand{\formalizationstatement}{The main theorem is formalized in Lean in an end-to-end development exceeding 100,000 lines of Lean code. The formalization and experiment artifacts are available in the \artifactlink: \artifacturl.}
\providecommand{\Description}[1]{}
\DeclareMathOperator{\indi}{\mathbbm{1}}
\DeclareMathOperator{\NodeIn}{N_{\mathrm{in}}}
\DeclareMathOperator{\NodeOut}{N_{\mathrm{out}}}
\DeclareMathOperator{\Prob}{\mathbb{P}}
\DeclareMathOperator{\Reg}{\mathrm{Reg}}
\DeclareMathOperator{\Time}{\mathcal{T}}
\theoremstyle{plain}
\newtheorem*{rep@theorem}{\rep@title}
\newcommand{\newreptheorem}[2]{%
  \newenvironment{rep#1}[2][]{%
    \def\rep@note{##1}%
    \ifx\rep@note\@empty
      \def\rep@title{#2 \ref{##2}}%
    \else
      \def\rep@title{#2 \ref{##2} (##1)}%
    \fi
    \begin{rep@theorem}}%
  {\end{rep@theorem}}}
\newtheorem{theorem}{Theorem}
\newtheorem{proposition}[theorem]{Proposition}
\newaliascnt{lemma}{theorem}
\newtheorem{lemma}[lemma]{Lemma}
\theoremstyle{definition}
\theoremstyle{remark}
\newtheorem*{question*}{Question}
\title{Nearly Tight Bounds for Cross-Learning Contextual Bandits with Graphical Feedback}
\author{
  Ruiyuan Huang\\
  School of Data Science\\
  Fudan University\\
  Shanghai, China\\
  \texttt{RuiyuanHuang00@gmail.com}
  \And
  Zengfeng Huang\thanks{Corresponding author.}\\
  School of Data Science\\
  Fudan University\\
  Shanghai, China\\
  \texttt{huangzf@fudan.edu.cn}
}
\begin{document}
\maketitle

\begin{abstract}
Repeated first-price auctions are contextual decision problems with censored
but reusable feedback: after submitting a bid, a learner can infer the outcomes
of related bids and evaluate them under different private values. This
structure motivates cross-learning contextual bandits with graphical feedback,
where playing an arm reveals the losses of its out-neighbors in every context.
A central open question was whether, under i.i.d.\ contexts and a fixed strongly
observable feedback graph with independence number $\alpha$, one can remove
every polynomial dependence on the number of contexts while attaining the classical
graphical-bandit rate $\widetilde O(\sqrt{\alpha T})$. The question was open
even for stochastic losses and graphs in which every arm has a self-loop. We
answer it affirmatively under the stronger model of oblivious adversarial
losses and for all strongly observable graphs, including those with arms
without self-loops. The key obstruction is that a frequently played
no-self-loop arm can nevertheless have a vanishing observation probability.
Our algorithm isolates one such arm per epoch, uses a pessimistic correction to
cancel its first-order estimation drift, and shifts the losses in the FTRL
analysis to control the resulting quadratic term. It achieves expected regret
$\widetilde O(\sqrt{\alpha T})$. Controlled synthetic experiments show the
benefit of combining graph feedback with cross-learning and exhibit the
predicted scaling in both $T$ and $\alpha$.
\formalizationstatement
\end{abstract}

\keywords{contextual bandits \and cross-learning \and feedback graphs \and FTRL}

% 从数字广告引出一价拍卖，argue 其重要性，指出它可以被自然地建模为 graphical cross-learning contextual bandit
%
% 指出 graphical cross-learning contextual bandit 的重要性，它能被 apply 在什么样的问题上，列出一系列文章研究它
%
% 指出理论上 graphical cross-learning contextual bandit extends two standard models，thus of theoretical interest
%
% 指出 graphical cross-learning contextual bandit with stochastic contexts 是一个中心 open question，叙述它被 Han 和 Wen 列为 open question，指出它的应用价值：研究 regret 如何随着独立数 scale
%
% 指出我们解决了这个问题。指出这个问题即使在 stochastic bandit with self-loop 下都没有解决，我们是直接在更强的 adversarial bandit without self-loop 下解决。着重提到 without self-loop 的问题很困难，例如 Neu2015 解决了 with self-loop bandit 的 high probability 之后十年，才有人（Luo2023）解决了 without self-loop bandit 的 high probability。
%
% 指出我们用数值模拟验证了我们算法的有效性。并且观测到我们的算法随着 alpha 非常好地 scale。

\section{Introduction}\label{sec:introduction}

Many sequential decisions reveal more than the payoff of the chosen action.
The feedback may be censored, yet still support counterfactual evaluation across
both actions and contexts.  Repeated first-price auctions are a canonical
example: a bidder observes a private value, submits a bid, and receives only
censored information about the highest competing bid.  Nevertheless, this
feedback reveals the counterfactual outcomes of all higher bids, and each
revealed outcome can be evaluated under every possible private value.
Candidate bids therefore serve as arms in a directed feedback graph, private
values act as contexts, and the feedback cross-learns across
contexts~\citep{Han24}.

The same structure appears beyond this auction model.  Transitively closed
graphs capture one-sided observations from threshold and censoring decisions in
auctions and inventory control~\citep{MAS24}; cross-learning also models online
bidding and sleeping bandits~\citep{Bal19,Huang2025}; and similarity-based graph
feedback arises in recommendation and clinical trials~\citep{Qi24Similar}.
Across these applications, one action informs neighboring actions, and the
information remains useful under multiple contexts.  Cross-learning contextual
bandits with graphical feedback capture both forms of reuse in one model.

The model combines two familiar settings: contextual bandits and graphical bandits.  A contextual bandit observes only
the chosen arm's loss in the realized context, whereas a graphical bandit also
observes the losses of the chosen arm's out-neighbors.  Cross-learning adds a
second dimension: each observed arm is revealed in every context.  The central
question is whether an algorithm can exploit both dimensions simultaneously,
rather than paying separately for each context or ignoring the graph.

We study $K$ arms over $T$ rounds and a finite context set $[M]$.  An oblivious
adversary fixes the loss tables in advance, while contexts are drawn i.i.d.\
from an unknown distribution $\nu$.  The feedback graph $G$ is fixed and known.
It is \emph{strongly observable}: every arm either has a self-loop or is
observed by every other arm.  We write $\alpha$ for its independence number.

This model raises a central question: can cross-learning eliminate the
dependence on $M$?  Strongly observable noncontextual graphical bandits attain
$\widetilde O(\sqrt{\alpha T})$ regret~\citep{GraphAlon15}.  Treating the $M$
contexts separately instead costs $\widetilde O(\sqrt{\alpha MT})$, while
using cross-learning but ignoring the graph costs
$\widetilde O(\sqrt{KT})$.  Han et al.~\citep{Han24} initiated the combined
model and obtained $\widetilde O(\sqrt{\min\{\alpha M,K\}T})$ for stochastic
losses and i.i.d.\ contexts.  Wen et al.~\citep{MAS24} then asked whether the
context-independent rate $\widetilde O(\sqrt{\alpha T})$ is attainable.  The
question remained open even for stochastic losses and all-self-loop graphs.

We answer this question affirmatively, and in a stronger loss model than the
one in which it was posed.

\begin{theorem}[Informal]\label{thm:main}
For i.i.d. contexts, oblivious adversarial losses, and any fixed strongly
observable feedback graph $G$, our algorithm has expected regret
$\widetilde O(\sqrt{\alpha T})$, with no polynomial dependence on the number
of contexts.
\end{theorem}

Here and throughout, $\widetilde O$ hides logarithmic factors in $M$, $K$, and
$T$.  Thus ``context-independent'' means independent of $M$ up to logarithmic
factors.

The result matches the minimax rate for noncontextual strongly observable graphical
bandits up to logarithmic factors and therefore settles the stochastic-loss
question as a direct corollary.  It also removes a restriction that is often
hidden in analyses of graphical feedback: arms need not all have self-loops.
This extension is substantive.  If arm $j$ has no self-loop, then
$\NodeIn(j)=[K]\setminus\{j\}$: its loss is observed only when another arm is
played.  Assigning most of the probability to $j$ therefore makes its
observation probability arbitrarily small.  The usual graph-inverse argument
cannot control the resulting importance weights.  The same obstruction delayed
high-probability guarantees for graphical bandits: results for all-self-loop
graphs~\citep{neu2015} preceded the general strongly observable case by eight
years~\citep{LuoGraph23}.

Our algorithm isolates at most one no-self-loop arm in each epoch.  A pessimistic
correction removes its first-order estimation drift, and a translation of the
FTRL losses prevents the same arm from creating an uncontrolled quadratic term;
the remaining arms are controlled by the independence number.  Controlled
scaling experiments show a clear improvement over graph-only and context-only
baselines and are consistent with the predicted square-root dependence on both
the horizon and the graph independence number.  We also provide an end-to-end
Lean formalization exceeding 100,000 lines of code; the formalization and
experiment artifacts are available in the \artifactlink\ (\artifacturl).

\paragraph{Contributions.}
First, we resolve the context-independent regret question under oblivious
adversarial losses, a stronger model than the stochastic setting in which the
question was open.  Second, we cover every fixed strongly observable graph by
combining a special-arm selector, a pessimistic correction, and a loss shift.
Third, we develop the random-denominator and joint-stability analysis required
by the dependent estimators.  Fourth, reproducible experiments test the benefit
of both feedback structures and the predicted scaling in $T$ and $\alpha$.
Finally, we formalize our main theorem in Lean with more than 100,000 lines of code.

\subsection{Technical Overview}\label{sec:overview}

\paragraph{Known context distribution.}
We first use the known-distribution case as a warm-up.  For each context $c$,
negative-entropy FTRL maintains a distribution $p_{t,c}$,
which is mixed with uniform exploration to form $r_{t,c}$.  Because contexts are
i.i.d. and losses are chosen by an oblivious adversary, expected regret can be
averaged over the context distribution:
\[
\Reg(\pi)=\sum_{c=1}^M\nu(c)\,
\E\!\left[\sum_{t=1}^T
\ip{r_{t,c}-\pi_c,\ell_{t,c}}\right].
\]
Cross-learning makes the loss of every observed arm available to the FTRL
instance of every context.  When $\nu$ is known, the learner can therefore use
the context-independent observation probability
\[
w_t(a)=\E_{c\sim\nu}\!\left[r_{t,c}(\NodeIn(a))\right]
\]
in all contextual loss estimators.  After averaging the FTRL local-norm terms
over $c$, the numerator becomes
$\bar p_t(a)=\E_{c\sim\nu}[p_{t,c}(a)]$.  Thus the number of contexts has
already disappeared from the local-norm calculation; it remains to control the
resulting sum over arms using the feedback graph.

\paragraph{Arms without self-loops in the warm-up.}
Let
\[
\barS=\{a\in[K]:a\notin\NodeIn(a)\}
\]
be the set of arms without self-loops.  For every $a\in\barS$, strong
observability gives $\NodeIn(a)=[K]\setminus\{a\}$.  Define the population play
mass
\[
m_t(a)=\E_{c\sim\nu}[r_{t,c}(a)].
\]
The observation probability defined above then satisfies
$w_t(a)=1-m_t(a)$ for $a\in\barS$.  Hence a no-self-loop arm with large play
mass has small observation probability.  Because $m_t$ is a probability
distribution, at most one arm can have mass above $1/2$.  Following the
special-arm construction of \citet{LuoGraph23}, if $\barS\ne\emptyset$, choose
\[
j_t\in\argmax_{a\in\barS}m_t(a)
\]
using a fixed tie-breaking rule, and call $j_t$ the \emph{special arm}.  We call
the arms in $[K]\setminus\{j_t\}$ \emph{regular arms}.  If
$\barS=\emptyset$, there is no special arm and all arms are regular.  In other
words, the regular-arm set is
\[
\mathcal B_t=
\begin{cases}
[K]\setminus\{j_t\},&\barS\ne\emptyset,\\
[K],&\barS=\emptyset.
\end{cases}
\]
Up to the constant factor introduced by uniform exploration, the graph-inverse
inequality of \citet{GraphAlon15} then bounds the regular-arm sum by
\[
\sum_{a\in\mathcal B_t}\frac{\bar p_t(a)}{w_t(a)+\gamma}
=\widetilde O(\alpha),
\]
with no multiplicative dependence on $M$.

Suppose now that a special arm exists, and write
$u_t(j_t)=w_t(j_t)+\gamma$.  Implicit exploration underestimates the loss of the
special arm by a first-order term.  Feeding FTRL the pessimistic
correction $\beta/u_t(j_t)$ cancels this drift in expectation.  Directly
squaring the correction in the FTRL local norm would introduce an inverse-square
cost.  We instead subtract the full special-coordinate input from every
coordinate.  This translation leaves the negative-entropy FTRL iterate
unchanged, makes the special coordinate contribute zero to the local norm, and
weights the remaining quadratic term by the mass away from $j_t$.  Strong
observability makes this away-mass comparable to $w_t(j_t)$, removing the
inverse-square dependence.  This completes the central argument when $\nu$ is
known.

\paragraph{From known to unknown distributions.}
The known-distribution warm-up uses $\nu$ in two places: to compute the
observation probabilities $w_t(a)$ in the importance-weighted loss estimators,
and to identify the heaviest no-self-loop arm through $m_t(a)$.  When $\nu$ is
unknown, it is tempting to replace these expectations by averages over the
contexts observed so far.  This plug-in approach does not directly work.  The
current FTRL policy is itself a function of those past contexts, so the terms in
the empirical average are not fresh i.i.d. samples of the desired expectation.
Moreover, the observation probability appears in the denominator of the loss
estimator, and hence even a small estimation error can be amplified when that
probability is small.

We use the epoch construction of \citet{Sch23} to create a fixed target that can
be estimated from fresh contexts.  At the beginning of epoch $e$, a delayed
snapshot $s_{e,c}$ of the FTRL policy has already been fixed, and
$r_{e,c}$ denotes its uniformly mixed version.  The learner still plays a
distribution close to its current FTRL iterate, but uses rejection sampling to
retain feedback as if it had sampled from $r_{e,c}/2$.  Consequently, throughout
the epoch the marginal probability of observing arm $a$ is the fixed quantity
\[
w_e(a)=\frac12\E_{c\sim\nu}
\left[r_{e,c}(\NodeIn(a))\right].
\]
Unlike the time-varying quantity defined by the current policy, the integrand
above is fixed before the contexts used to estimate it are drawn.  An empirical
average of these fresh contexts is therefore an unbiased, concentrating estimate
$\widehat w_e(a)$, and a single denominator can be used throughout epoch $e$.

To preserve the required independence, consecutive rounds use the same play
distribution and are randomly assigned, after play, to two disjoint samples.
The frequency-estimation sample from the preceding epoch estimates
$w_e(a)$; the other sample supplies the retained losses used in FTRL updates and
also estimates the snapshot masses needed to select the empirically heaviest
no-self-loop arm $j_e$.  In particular, the data-dependent selector $j_e$ is
independent of the sample used to estimate its denominator.  The resulting loss
estimator is
\[
W_e(a)=\widehat w_e(a)+\frac32\gamma,
\qquad
\widehat\ell_{t,c}(a)=
\frac{2\ell_{t,c}(a)}{W_e(a)}
\indi\{B_t\in\NodeIn(a)\}.
\]
Thus snapshots and rejection sampling turn an adaptive, round-dependent
observation probability into an epochwise population quantity, while sample
splitting makes that quantity estimable without reusing the randomness that
defines it.  These ingredients handle the unknown context distribution.  The
empirical selector, together with the pessimistic correction and loss
translation from the warm-up, handles the additional graphical obstruction
created by arms without self-loops.

\paragraph{Random denominators and dependent trajectories.}
The empirical denominator $W_e$ enters both the loss estimator and the special
arm correction $\beta/W_e(j_e)$.  Consequently, the actual FTRL trajectory is
correlated with the frequency sample, and the denominator estimates are also
correlated across arms.  We introduce a population-denominator pseudo-estimator
$\widetilde\ell$ and the corresponding ideal trajectory $\widetilde p$.
Conditional on the snapshot, selector, and loss-estimation randomness, this
ideal trajectory is fixed while the frequency sample remains fresh.  Scalar
random-denominator moment bounds then apply coordinate-wise, and a joint
Softmax product-stability inequality converts the correlated trajectory error
into a sum of marginal second moments without assuming independence across
arms.  The pessimistic correction supplies the same first-order cancellation
as in the known-distribution warm-up, while the loss translation controls its
quadratic cost.  The proof follows the decomposition
\[
\Reg=\bias{1}+\bias{2}+\bias{3}+\bias{4}+\mathbf{FTRL}-\bias{5},
\]
where the first four biases respectively account for the played distribution,
population estimation, the empirical denominator on the ideal trajectory, and
the resulting trajectory perturbation.  The shifted FTRL term controls the
second-order cost, and $-\bias{5}$ supplies the pessimistic cancellation.

\subsection{Related Work}\label{sec:related-work}

\paragraph{Bandits with feedback graphs.}
Feedback graphs were introduced by \citet{MS11} as a framework interpolating
between bandit and full-information feedback.  \citet{GraphAlon15} characterized
the minimax dependence on the horizon: strongly observable graphs admit
$\widetilde O(\sqrt{\alpha T})$ regret, whereas weakly observable graphs have a
different $T^{2/3}$ regime; subsequent work sharpened the understanding of the
latter class \citep{Understanding2021}.  Gap-dependent guarantees for stochastic
graphical bandits have been developed for UCB and Thompson sampling
\citep{lykouris20a}.  High-probability adversarial guarantees progressed from
the all-self-loop setting \citep{neu2015} to general, time-varying strongly
observable graphs \citep{LuoGraph23}.  Contextual extensions have also considered
uninformed feedback graphs and bidding applications \citep{Zhang24Uninformed},
while similarity-induced feedback graphs arise in recommendation and clinical
trials \citep{Qi24Similar}.  Our graph is fixed and known; the issue here is how
its observation structure interacts with cross-learning and an unknown context
distribution.

\paragraph{Cross-learning and online auctions.}
\citet{Bal19} introduced contextual bandits with cross-learning, motivated by
repeated auctions and sleeping bandits.  This perspective has supported a
broader line of work on learning in repeated first-price auctions, including
adversarial competition and budget constraints
\citep{Han2020Adversarial,Aiauction,WangAuction}.  For adversarial losses and
i.i.d. contexts with an unknown context distribution, \citet{Sch23} obtained
the context-independent rate $\widetilde O(\sqrt{KT})$ by combining epoch
snapshots, rejection sampling, and FTRL; \citet{Huang2025} later established a
high-probability counterpart.  We use the epoch scaffold of \citet{Sch23}, but
graphical observation probabilities introduce the no-self-loop coordinate and
the dependent-denominator analysis described above.

More recent auction-learning work extends this application domain in several
complementary directions.  Non-stationary competitors motivate dynamic-regret
benchmarks \citep{Hu25Nonstationary}, while budget and return-on-investment
constraints have been studied under full and one-bit feedback
\citep{Aggarwal25NonTruthful,Li25Autobidding} and under non-stationary arrivals
\citep{WangJiang26Adaptive}.  Closest to our motivating feedback structure,
\citet{Fu26Contextual} study contextual first-price auctions with budgets when
only the winning bid is revealed and the competing bid depends on the context.
These results demonstrate the broader role of censored and structured feedback
in online bidding, but they use auction-specific stochastic or regression
models rather than the adversarial cross-learning graph model studied here.

\paragraph{Combining cross-learning and graph feedback.}
\citet{Han24} initiated the combined model through repeated first-price auctions.
They proved that adversarial contexts preclude the desired
$\widetilde O(\sqrt{\alpha T})$ rate and obtained
$\widetilde O(\sqrt{\min\{\alpha M,K\}T})$ for stochastic losses and i.i.d.
contexts.  \citet{MAS24} further characterized the adversarial-context setting
using the maximum acyclic subgraph parameter and explicitly left the
context-independent $\widetilde O(\sqrt{\alpha T})$ rate under stochastic
contexts as an open problem.  We close this gap under the stronger model of
oblivious adversarial losses.  Moreover, the result covers all fixed strongly
observable graphs, whereas the open question was unresolved even when every arm
had a self-loop and the losses were stochastic.

\section{Problem Setup and Preliminaries}\label{sec:setup}
We study a contextual $K$-armed bandit over $T$ rounds, with contexts in
$[M]$.  Before the interaction, an oblivious adversary chooses losses
$\ell_{t,c}(a)\in[0,1]$ for every round $t\in[T]$, context $c\in[M]$, and arm
$a\in[K]$.  This loss model includes stochastic losses as a special case.

Let $G=([K],E)$ be a directed feedback graph.  For each arm $a$, define its
out- and in-neighborhoods by
$\NodeOut(a)=\{v:a\to v\}$ and $\NodeIn(a)=\{v:v\to a\}$, respectively.
The independence number $\alpha=\alpha(G)$ is the size of the largest set of
vertices with no edge between any pair.  For $p\in\mathbb R^K$ and
$V\subseteq[K]$, write $p(V)=\sum_{v\in V}p(v)$.

The graph is \emph{strongly observable}: for every arm $a$, either
$a\in\NodeIn(a)$ or $\NodeIn(a)=[K]\setminus\{a\}$.  Partition the arms into
\[
S = \{a \in [K] : a \in \NodeIn(a)\}
\quad \text{and} \quad
\barS = [K] \setminus S,
\]
where $S$ contains the arms with self-loops and $\barS$ those without them.

On round $t$, a context $c_t$ is drawn i.i.d.\ from an unknown distribution
$\nu$ over $[M]$ and revealed to the learner.  The learner selects an arm
$a_t$, incurs loss $\ell_{t,c_t}(a_t)$, and observes
$\ell_{t,c}(a)$ for every context $c\in[M]$ and every
$a\in\NodeOut(a_t)$.  The latter is the cross-learning feedback.

Fix any policy $\pi:[M]\to[K]$ and identify $\pi_c$ with the corresponding
vertex of $\Delta([K])$.  Its expected regret is
\[
\Reg_T(\pi)=\E\!\left[\sum_{t=1}^T
\ell_{t,c_t}(a_t)-\ell_{t,c_t}(\pi_{c_t})\right],
\]
where the expectation is over the contexts and the learner's randomness.

\section{Algorithm}\label{sec:algorithm}

We first assume that the context distribution is known.  This warm-up exposes
the FTRL and graphical-feedback ideas without empirical denominators, and
pinpoints the two quantities that must later be estimated.

\subsection{Warm-up: known context distribution}
\label{sec:known-distribution}

Suppose temporarily that \(\nu\) is known.  For every context \(c\), let
\(p_{t,c}\) be the negative-entropy FTRL iterate and mix it with the uniform
distribution:
\[
r_{t,c}=(1-\rho)p_{t,c}+\frac{\rho}{K}\mathbf 1.
\]
Here \(\Psi(p)=\sum_{a=1}^Kp(a)\log p(a)\) denotes the negative-entropy
regularizer.
Because \(\nu\) is known, before observing \(c_t\) the learner can compute the
population action mass and observation probability
\begin{equation}
\label{eq:known-observation-probability}
m_t(a)=\E_{c\sim\nu}[r_{t,c}(a)],
\qquad
w_t(a)=\E_{c\sim\nu}[r_{t,c}(\NodeIn(a))].
\end{equation}
After drawing \(A_t\sim r_{t,c_t}\), cross-learning reveals
\(\ell_{t,c}(a)\) for every context \(c\) whenever
\(A_t\in\NodeIn(a)\).  We therefore use the implicit-exploration estimator
\[
\widehat\ell_{t,c}(a)
=\frac{\ell_{t,c}(a)}{u_t(a)}
\indi\{A_t\in\NodeIn(a)\},
\qquad
u_t(a)=w_t(a)+\gamma.
\]
Conditioned on the past,
\[
\E_t[\widehat\ell_{t,c}(a)]
=\ell_{t,c}(a)\frac{w_t(a)}{w_t(a)+\gamma}.
\]
Thus \(\gamma\) controls the estimator's variance at the cost of a small
downward bias.

There is one complication that does not arise when every arm has a self-loop.
For \(a\in\barS\), strong observability gives
\(w_t(a)=1-m_t(a)\), which can be small when \(m_t(a)\) is large.  Since at
most one no-self-loop arm can carry more than half of the population mass, set
\[
j_t\in\argmax_{a\in\barS}m_t(a)
\]
when \(\barS\ne\emptyset\), with deterministic tie-breaking.  Following the
special-arm principle of \citet{LuoGraph23}, we add the pessimistic correction
\[
b_t(a)=\frac{\beta}{u_t(j_t)}\indi\{a=j_t\};
\]
if \(\barS=\emptyset\), set \(b_t=0\).  This correction prevents the only
potentially under-observed coordinate from accumulating an uncontrolled
first-order estimation bias.  The resulting warm-up algorithm is summarized
in \Cref{alg:known}.

\begin{algorithm}
\caption{FTRL warm-up when the context distribution is known}
\label{alg:known}
\textbf{Input:} Parameters \(\eta,\gamma,\rho,\beta>0\).\\
\For{$t=1,\ldots,T$}{
    Set \(L_{t-1,c}=\sum_{s<t}(\widehat\ell_{s,c}+b_s)\) for every
    \(c\in[M]\).\\
    Set \(p_{t,c}=\argmin_{p\in\Delta([K])}
    \{\ip{p,L_{t-1,c}}+\Psi(p)/\eta\}\) for every \(c\in[M]\).\\
    Set \(r_{t,c}=(1-\rho)p_{t,c}+\rho\mathbf 1/K\) for every \(c\in[M]\).\\
    Compute \(m_t(a)\), \(w_t(a)\), and \(u_t(a)=w_t(a)+\gamma\) for every
    \(a\in[K]\) using \eqref{eq:known-observation-probability}.\\
    If \(\barS\ne\emptyset\), set
    \(j_t\in\argmax_{a\in\barS}m_t(a)\).\\
    Observe \(c_t\), draw \(A_t\sim r_{t,c_t}\), and observe
    \(\ell_{t,c}(a)\) for every \(c\in[M]\) and
    \(a\in\NodeOut(A_t)\).\\
    Set \(\widehat\ell_{t,c}(a)=
    \ell_{t,c}(a)\indi\{A_t\in\NodeIn(a)\}/u_t(a)\) for every \(c,a\).\\
    Set \(b_t(a)=\beta\indi\{a=j_t\}/u_t(j_t)\) if
    \(\barS\ne\emptyset\), and set \(b_t(a)=0\) otherwise.\\
}
\end{algorithm}

\begin{proposition}
\label{prop:known}
If \(\nu\) is known and \(G\) is strongly observable, then for appropriate
\(\eta,\gamma,\rho,\beta=\widetilde\Theta(1/\sqrt{\alpha T})\), with
\(\rho\) and \(\beta\) chosen as constant multiples of \(\gamma\),
\Cref{alg:known} satisfies
\[
\Reg(\pi)=\widetilde O(\sqrt{\alpha T})
\]
for every policy \(\pi:[M]\to[K]\).
\end{proposition}

To see the main idea, average the contextual FTRL bounds with weights
\(\nu(c)\).  The regular coordinates satisfy the usual graph-sum bound
\[
\sum_{a\ne j_t}\frac{\E_{c\sim\nu}[p_{t,c}(a)]}{u_t(a)}
=\widetilde O(\alpha).
\]
For the special coordinate, shift by the special coordinate's own FTRL input.
This does not change the FTRL iterate, but it removes that coordinate from the
quadratic term.  The remaining cost is weighted by the mass away from \(j_t\),
and strong observability gives
\(1-m_t(j_t)=w_t(j_t)\le u_t(j_t)\).  Finally, the pessimistic correction
cancels the special arm's first-order underestimation drift, up to
\(O(\beta T)\).  Together with the \(O(\rho T)\) cost of uniform mixing, this
yields
\[
\Reg(\pi)
\le \widetilde O\!\left(
\rho T+\frac{\log K}{\eta}
+(\eta+\gamma)\alpha T+\beta T
\right),
\]
which proves Proposition~\ref{prop:known} under the stated parameter choice.

The warm-up isolates two required quantities: the population observation
probabilities \(w_t\) and the masses \(m_t\) used to select a special arm.
Neither is available when \(\nu\) is unknown.  We next construct epochwise
estimates without reusing the contexts that determine the current policy.

\subsection{From known to unknown}

\paragraph{Unknown context distribution.}
Importance-weighted loss estimation requires the probability with which an arm
is observed.  If the learner used a contextual distribution $p_{t,c}$ directly,
the relevant graphical observation mass for arm $a$ would be
\[
\E_{c\sim\nu}\bigl[p_{t,c}(\NodeIn(a))\bigr].
\]
This quantity is unavailable because $\nu$ is unknown.  The plug-in estimate
\[
\frac{1}{t-1}\sum_{s<t}p_{t,c_s}(\NodeIn(a)),
\]
is invalid because $p_t$ already depends on $c_1,\ldots,c_{t-1}$ through the
FTRL updates.  The summands and the contexts being averaged are therefore
adaptively dependent, so standard i.i.d.\ concentration does not apply.

We use the epoch construction of \citet{Sch23}.  A delayed FTRL snapshot fixes
the observation rule within each epoch, while the played distribution continues
to track the current iterate.  Rejection sampling couples the two.  A random
split then assigns one round of each pair to frequency estimation and the other
to loss estimation.  The former supplies fresh contexts for the next epoch's
denominators; only the latter updates FTRL.

\paragraph{Arms without self-loops.}
We use the same special-arm treatment as in the known-distribution warm-up.
Since $\nu$ is now unknown, the special arm in each epoch is selected using an
empirical snapshot masses computed from contexts independent of those used for
the observation probabilities.  The selector and its pessimistic correction
are defined next.

\subsection{Algorithm statement}

For every context $c$, the algorithm maintains the negative-entropy FTRL
distribution
\begin{align}
\label{eq:ftrl-update}
p_{t,c}
&=\argmin_{p\in\Delta([K])}
\left\{
\ip{p,\sum_{s<t}\bigl(\widehat\ell_{s,c}+b_s\bigr)}
+\frac{1}{\eta}\Psi(p)
\right\}, \nonumber\\
&\hspace{35mm}\Psi(p)=\sum_{a=1}^K p(a)\log p(a).
\end{align}
Equivalently,
\[
p_{t,c}(a)\propto
\exp\!\left(-\eta\sum_{s<t}
\bigl(\widehat\ell_{s,c}(a)+b_s(a)\bigr)\right).
\]
Here $\eta$ is the learning rate, $\widehat\ell_{t,c}$ is the loss estimate,
and $b_t$ is the pessimistic bias.  Both are defined below.  We explicitly mix
the FTRL iterate with the uniform distribution:
\[
r^p_{t,c}=(1-\rho)p_{t,c}+\frac{\rho}{K}\mathbf 1,
\]
where $\rho\in(0,1)$ controls exploration.

\paragraph{Epoch snapshots and sample splitting.}
Assume for simplicity that $L$ is even and divides $T$, and let
$E=T/L$.  Write $\Time_e$ for the rounds in epoch $e$.  The first epoch is a
warm-up epoch, and we initialize $s_{1,c}=s_{2,c}=s_{3,c}=\mathbf 1/K$ for every
context $c$.  Thereafter, the snapshot produced in epoch $e$ is delayed by two
epochs:
\[
s_{e+2,c}=p_{eL,c}.
\]
Thus, for $e\ge 4$, the snapshot used in epoch $e$ is
$s_{e,c}=p_{(e-2)L,c}$.  Its mixed version is
\[
r_{e,c}=(1-\rho)s_{e,c}+\frac{\rho}{K}\mathbf 1.
\]

We partition each epoch into consecutive pairs.  The two rounds of a pair use
the same sampling distribution.  After both rounds have been played, a uniform
random permutation designates one as a frequency-estimation round and the other
as a loss-estimation round.  The resulting sets satisfy
\[
\Time_e^f\mathbin{\dot\cup}\Time_e^\ell=\Time_e,
\qquad
|\Time_e^f|=|\Time_e^\ell|=L/2.
\]

\paragraph{Selecting the special arm.}
Define the population and empirical snapshot masses
\[
m_e(a)=\E_{c\sim\nu}[r_{e,c}(a)],
\qquad
\widehat m_e(a)=\frac{1}{|\Time_{e-1}^\ell|}
\sum_{t\in\Time_{e-1}^\ell}r_{e,c_t}(a).
\]
If $\barS\ne\emptyset$, we choose
\[
j_e\in\argmax_{a\in\barS}\widehat m_e(a),
\]
breaking ties by a fixed deterministic rule.  The loss-estimation contexts used
for this selector are disjoint from the frequency-estimation contexts used for
the denominator below.  This independence is important because $j_e$ is a
data-dependent coordinate.  Define the set of regular arms as
$\mathcal B_e=[K]\setminus\{j_e\}$ when $\barS\ne\emptyset$, and as
$\mathcal B_e=[K]$ otherwise.  The selector need not recover the population
maximizer; it only needs to isolate the single coordinate that may have a small
observation probability.  This property is formalized in
\Cref{lem:selector}.

\paragraph{Freezing observation probabilities.}
For every $t\in\Time_e$, define the sampling distribution
\[
q_{t,c}=
\begin{cases}
r^p_{t,c},&
2r^p_{t,c}(a)\ge r_{e,c}(a)
\text{ for every }a\in[K],\\
r_{e,c},&\text{otherwise}.
\end{cases}
\]
After drawing $A_t\sim q_{t,c_t}$, the learner retains the entire graphical
feedback generated by $A_t$ with probability
\[
\frac{r_{e,c_t}(A_t)}{2q_{t,c_t}(A_t)}.
\]
The pointwise test in the definition of $q_{t,c}$ guarantees that this is a
valid probability.  Let $B_t=A_t$ if the feedback is retained and
$B_t=\bot$ otherwise.  Conditional on the context and history,
\[
\Prob\!\left(B_t\in\NodeIn(a)\mid c_t=c,\mathcal H_{t-1}\right)
=\frac12 r_{e,c}(\NodeIn(a)).
\]
Therefore, the marginal observation probability is fixed within epoch $e$ and
equals
\begin{equation}
\label{eq:epoch-observation-probability}
w_e(a)=\frac12\E_{c\sim\nu}[r_{e,c}(\NodeIn(a))].
\end{equation}
The fallback $q_{t,c}=r_{e,c}$ makes the rejection probability valid on every
sample path; the stability analysis shows that the fallback is not invoked on
the high-probability event used in the proof.

\paragraph{Frequency and loss estimation.}
Using the frequency-estimation contexts from epoch $e-1$, we estimate all
observation probabilities from the same sample:
\[
\widehat w_e(a)=\frac{1}{|\Time_{e-1}^f|}
\sum_{t\in\Time_{e-1}^f}
\frac12 r_{e,c_t}(\NodeIn(a)),
\qquad a\in[K].
\]
This is an unbiased estimator of $w_e(a)$.  We use the stabilized denominator
\[
W_e(a)=\widehat w_e(a)+\frac32\gamma,
\]
where $\gamma>0$ is an implicit exploration parameter.  For a
loss-estimation round $t\in\Time_e^\ell$, cross-learning makes the retained
loss of every observed arm available for every context, so we set
\[
\widehat\ell_{t,c}(a)
=\frac{2\ell_{t,c}(a)}{W_e(a)}
\indi\{B_t\in\NodeIn(a)\},
\qquad c\in[M],\ a\in[K].
\]
The factor (2) compensates for selecting only one loss-estimation round from
each pair.  We set $\widehat\ell_{t,c}=0$ on frequency-estimation rounds.  Finally, if a
special arm exists, we define
\[
b_t(a)=\frac{\beta}{W_e(j_e)}
\indi\{a=j_e,\ t\in\Time_e^\ell\};
\]
otherwise $b_t=0$.  FTRL is updated with the shifted input
$\widehat\ell_{t,c}+b_t$.  The additive $3\gamma/2$ in $W_e$ is the standard
implicit-exploration device used to control random importance denominators
\citep{neu2015}.

\begin{algorithm}
\caption{FTRL with unknown context distribution and graphical feedback}
\label{alg:unknown}
\textbf{Input:} Parameters $\eta,\gamma,\rho,\beta>0$ and an even epoch length
$L$ dividing $T$.\\
Set $s_{1,c}=s_{2,c}=s_{3,c}=\mathbf 1/K$ for every $c\in[M]$.\\
Set $\widehat w_2(a)=\widehat m_2(a)=0$ for every $a\in[K]$, and compute
$r_{2,c}=(1-\rho)s_{2,c}+\rho\mathbf 1/K$ for every $c$.\\
\tcp{Warm-up epoch: construct the quantities used in epoch 2.}
\For{$t=1,3,\ldots,L-1$}{
    \For{$t'\in\{t,t+1\}$}{
        Observe $c_{t'}$ and play $A_{t'}\sim s_{1,c_{t'}}$.\\
    }
    $(t_f,t_\ell)\leftarrow\mathsf{RandPerm}(t,t+1)$.\\
    \For{$a\in[K]$}{
        $\widehat w_2(a)\leftarrow\widehat w_2(a)
        +r_{2,c_{t_f}}(\NodeIn(a))/L$.\\
        $\widehat m_2(a)\leftarrow\widehat m_2(a)
        +2r_{2,c_{t_\ell}}(a)/L$.\\
    }
}
Set $W_2(a)=\widehat w_2(a)+3\gamma/2$ for every $a$.\\
If $\barS\ne\emptyset$, set
$j_2\in\argmax_{a\in\barS}\widehat m_2(a)$.\\
\For{$e=2,\ldots,E$}{
    Set $\widehat w_{e+1}(a)=\widehat m_{e+1}(a)=0$ for every $a\in[K]$.\\
    Compute $r_{e+1,c}=(1-\rho)s_{e+1,c}+\rho\mathbf 1/K$ for every $c$.\\
    \For{$t=(e-1)L+1,(e-1)L+3,\ldots,eL-1$}{
        Compute $p_{t,c}$ by \eqref{eq:ftrl-update} and
        $r^p_{t,c}=(1-\rho)p_{t,c}+\rho\mathbf 1/K$ for every $c$.\\
        Set $q_{t,c}=r^p_{t,c}$ if
        $2r^p_{t,c}(a)\ge r_{e,c}(a)$ for every $a$; otherwise set
        $q_{t,c}=r_{e,c}$.\\
        \For{$t'\in\{t,t+1\}$}{
            Observe $c_{t'}$ and play $A_{t'}\sim q_{t,c_{t'}}$.\\
            Observe $\ell_{t',c}(a)$ for every $c\in[M]$ and
            $a\in\NodeOut(A_{t'})$.\\
        }
        $(t_f,t_\ell)\leftarrow\mathsf{RandPerm}(t,t+1)$.\\
        Draw $S_t\sim\operatorname{Bernoulli}
        \!\left(r_{e,c_{t_\ell}}(A_{t_\ell})/
        (2q_{t,c_{t_\ell}}(A_{t_\ell}))\right)$.\\
        Set $B_{t_\ell}=A_{t_\ell}$ if $S_t=1$, and set
        $B_{t_\ell}=\bot$ otherwise.\\
        Set $\widehat\ell_{t_f,c}=0$ and $b_{t_f}=0$ for every $c$.\\
        \For{$a\in[K]$}{
            $\widehat w_{e+1}(a)\leftarrow\widehat w_{e+1}(a)
            +r_{e+1,c_{t_f}}(\NodeIn(a))/L$.\\
            $\widehat m_{e+1}(a)\leftarrow\widehat m_{e+1}(a)
            +2r_{e+1,c_{t_\ell}}(a)/L$.\\
            $\widehat\ell_{t_\ell,c}(a)\leftarrow
            2\ell_{t_\ell,c}(a)\indi\{B_{t_\ell}\in\NodeIn(a)\}/W_e(a)$
            for every $c$.\\
            Set $b_{t_\ell}(a)\leftarrow
            \beta\indi\{a=j_e\}/W_e(j_e)$ if $\barS\ne\emptyset$;
            otherwise set $b_{t_\ell}(a)\leftarrow0$.\\
        }
    }
    Set $W_{e+1}(a)=\widehat w_{e+1}(a)+3\gamma/2$ for every $a$.\\
    If $\barS\ne\emptyset$, set
    $j_{e+1}\in\argmax_{a\in\barS}\widehat m_{e+1}(a)$.\\
    Set $s_{e+2,c}=p_{eL,c}$ using \eqref{eq:ftrl-update} for every $c$.\\
}
\end{algorithm}

\begin{figure*}[!t]
  \centering
  \includegraphics[width=\textwidth]{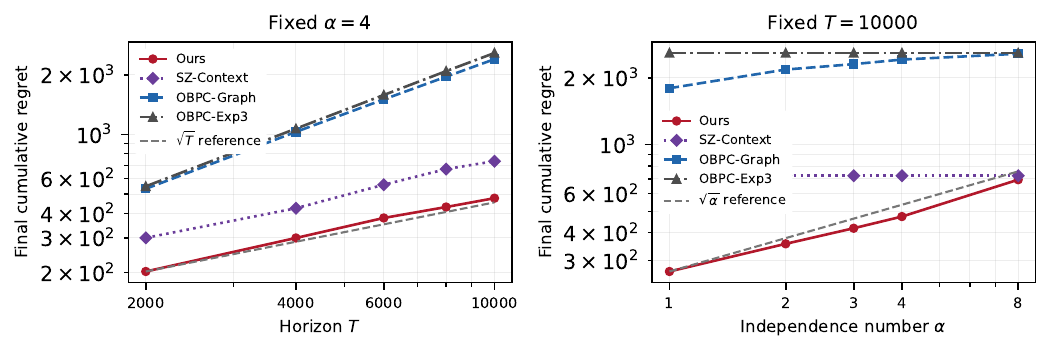}
  \Description{Two log-log plots of final cumulative regret. The left varies
  the horizon while fixing the graph independence number at four. The right
  varies the independence number while fixing the horizon at ten thousand.
  Both compare the proposed algorithm with graph-only and contextual Exp3
  baselines and include a normalized square-root reference.}
  \caption{Controlled regret scaling.  Left: \(T\) varies at fixed
  \(\alpha=4\).  Right: \(\alpha\) varies at fixed \(T=10^4\).  Dashed gray
  curves are square-root references normalized at the smallest point; markers
  show means over ten runs.  Every graph contains arm 0 without a self-loop.}
  \label{fig:scaling}
\end{figure*}

\section{Analysis}\label{sec:analysis}

The analysis combines the epoch construction of \citet{Sch23}, the graph-sum
inequality of \citet{GraphAlon15}, and a special-arm mechanism inspired by
\citet{LuoGraph23}.  The main result is the following theorem.

\begin{reptheorem}[Formal]{thm:main}
Choose
\[
\iota=C\log(MKT),\qquad
L=C_L\sqrt{\alpha T\iota},\qquad
\gamma=C_\gamma\frac{\iota}{L},\qquad
\eta=\frac{c_\eta}{L},
\]
and set $\rho=C_\rho\gamma$ and $\beta=C_\beta\gamma$, where the absolute
constants satisfy the stability conditions in
\Cref{lem:s_smaller_p_smaller_2s}.  Then
\Cref{alg:unknown} satisfies
\[\Reg_T(\pi)=\widetilde O(\sqrt{\alpha T}).\]
The hidden factors are logarithmic in $M,K,T$; in particular, the bound has no
polynomial dependence on $M$.
\end{reptheorem}

We organize the proof around two analytical proxies and a six-term regret
decomposition.  The full bounds appear in the appendix.

\paragraph{Pair reduction.}
The two rounds in each pair use the same policy, and the loss-estimation side
is chosen uniformly after both rounds are played.  Hence the expected regret
of a pair is exactly twice the expected regret of its selected loss-estimation
round:
\[
\E[R_t+R_{t+1}]=2\E[R_{t_\ell}].
\]
Here $C_s$ is the physical context and
$R_s=\ip{q_{s,C_s}-\pi_{C_s},\ell_{s,C_s}}$.  Since the policy table is
predictable and $C_t\sim\nu$,
\[
\E[R_t]=\E\!\left[\sum_{c\in[M]}\nu(c)
\ip{q_{t,c}-\pi_c,\ell_{t,c}}\right].
\]
Thus $c$ below is a counterfactual evaluation index, whereas $B_t$ is generated
by the physical loss-side context; cross-learning makes the retained loss at
$B_t$ available for every such $c$.  We drop the warm-up cost $L$, index only
selected rounds by $t$, and let $\ell_{t,c}$ denote their doubled loss.

\paragraph{Population proxies.}
The empirical denominator $W_e(a)$ is estimated from only $\Theta(L)$ contexts
and is reused throughout epoch $e$.  A pathwise comparison with the population
probability is too costly, so the proof instead controls its first two moments.
Write $u_e(a)=w_e(a)+\gamma$ and define the uncomputable pseudo-estimator
\[
\widetilde{\ell}_{t,c}(a)
=\frac{\ell_{t,c}(a)}{u_e(a)}
\indi\{B_t\in\NodeIn(a)\}.
\]
It has the same retained-feedback numerator as $\widehat\ell_{t,c}(a)$ but uses
the population denominator $u_e(a)$ instead of $W_e(a)$.

The pessimistic correction also depends on $W_e(j_e)$.  We therefore define the
fully ideal FTRL input
\[
\widetilde y_{t,c}(a)=\widetilde\ell_{t,c}(a)
+\frac{\beta}{u_e(j_e)}
\indi\{a=j_e\},
\]
where the second term is absent when no special arm exists.  Within epoch $e$,
the ideal trajectory starts from the same state as the actual trajectory and
evolves according to
\[
\widetilde p_{t,c}\propto p_{(e-1)L,c}\circ
\exp\!\left(-\eta
\sum_{s\in\Time_e^\ell,\,s<t}\widetilde y_{s,c}\right).
\]
After conditioning on the snapshot, selector sample, and randomness used for
loss estimation, $\widetilde p$ is fixed and the frequency sample remains
fresh.  This is the conditional independence needed for the denominator moment
bounds.  The actual trajectory $p$, by contrast, depends on all coordinates of
that shared sample and must be compared with $\widetilde p$ jointly.

\paragraph{Regret decomposition.}
Write $\sum_\ell=\sum_{e=2}^E\sum_{t\in\Time_e^\ell}$ and let
$\Reg_{\mathrm{main}}(\pi)$ denote the post-warm-up regret.  The pair
reduction, marginalization over the physical context, and addition and
subtraction of the two proxies give
\begin{align*}
\Reg_{\mathrm{main}}(\pi) &=\E\left[\sum_\ell
\sum_{c\in[M]}\nu(c)\ip{q_{t,c}-\pi_c,\ell_{t,c}}\right] \\
&=\underbrace{\E\left[\sum_\ell\sum_c\nu(c)
\ip{q_{t,c}-p_{t,c},\ell_{t,c}}\right]}_{\bias{1}} \\
&+\underbrace{\E\left[\sum_\ell\sum_c\nu(c)
\ip{p_{t,c}-\pi_c,\ell_{t,c}-\widetilde\ell_{t,c}}\right]}_{\bias{2}} \\
&+\underbrace{\E\left[\sum_\ell\sum_c\nu(c)
\ip{\widetilde p_{t,c}-\pi_c,\widetilde\ell_{t,c}-\widehat\ell_{t,c}}\right]}_{\bias{3}} \\
&+\underbrace{\E\left[\sum_\ell\sum_c\nu(c)
\ip{p_{t,c}-\widetilde p_{t,c},\widetilde\ell_{t,c}-\widehat\ell_{t,c}}\right]}_{\bias{4}}\\
&+\underbrace{\E\left[\sum_\ell\sum_c\nu(c)
\ip{p_{t,c}-\pi_c,\widehat\ell_{t,c}+b_t}\right]}_{\mathbf{FTRL}}\\
&\quad-\underbrace{\E\left[\sum_\ell\sum_c\nu(c)
\ip{p_{t,c}-\pi_c,b_t}\right]}_{\bias{5}}.
\end{align*}

\paragraph{What each term controls.}
$\bias{1}$ pays for uniform mixing and the snapshot fallback in the played
distribution.  Snapshot stability implies $q_t=r_t^p$ on the good event, so
this term is $O(\rho T)$ plus the negligible failure contribution.
$\bias{2}$ replaces the true loss by the population-denominator estimator.
For regular arms, its implicit-exploration drift is bounded by
$\gamma$ times the graph sum.  On the special arm, the remaining positive
drift is deliberately left for cancellation by $-\bias{5}$.

$\bias{3}$ replaces population denominators by empirical denominators while
holding the ideal trajectory fixed.  Under the conditioning above, its
coefficients are fixed and scalar first-moment bounds for $u_e(a)/W_e(a)$ apply.
$\bias{4}$ pays for replacing $\widetilde p$ by the empirical trajectory $p$.
All denominators use the same frequency contexts and may be correlated, so a
coordinate-wise comparison would not suffice.  A joint Softmax product-stability
inequality reduces this term to a sum of squared coordinate errors; scalar
second-moment bounds and linearity of expectation then finish the estimate
without assuming independence across arms.  The same comparison incorporates
$\beta/W_e(j_e)-\beta/u_e(j_e)$ on the special coordinate.

\paragraph{Closing the special-arm terms.}
The preceding steps extend the epoch analysis of \citet{Sch23}, but the usual
graph sum does not cover a dominant no-self-loop arm.  For
$a\in\barS$, strong observability gives
$\NodeIn(a)=[K]\setminus\{a\}$, so a large playing probability corresponds to a
small observation denominator.  The selector isolates the only arm for which
this can occur as $j_e$; all other arms satisfy the regular graph-sum bound.
On $j_e$, the negative correction term $-\bias{5}$ cancels the first-order
drift retained from $\bias{2}$ and $\bias{3}$ when $\beta$ is a sufficiently
large constant multiple of $\gamma$.

It remains to control the correction inside the FTRL regret.  A direct square
would produce a penalty proportional to $W_e(j_e)^{-2}$.  We instead use
translation invariance of
negative-entropy FTRL and subtract the full special-coordinate input from every
coordinate.  The special coordinate then contributes zero to the local norm.
The remaining quadratic term is weighted by the probability mass away from
$j_e$.  Writing $\bar p_t(a)=\E_{c\sim\nu}[p_{t,c}(a)]$, snapshot stability and
strong observability imply
\[
1-\bar p_t(j_e)
=\E_{c\sim\nu}\!\left[p_{t,c}(\NodeIn(j_e))\right]
\lesssim w_e(j_e)+\rho
\lesssim u_e(j_e).
\]
Because the retained-observation indicator has mean
$w_e(j_e)\le u_e(j_e)$ and $\beta\asymp\gamma\le u_e(j_e)$, this away-mass
relation removes the inverse-square dependence.  The regular local-norm terms
contribute $\widetilde O(\eta\alpha T)$.  With the parameters in
\Cref{thm:main}, the complete decomposition is therefore
$\widetilde O(\sqrt{\alpha T})$; the detailed bounds appear in the appendix.

\section{Experiments and Formalization}\label{sec:experiments}

We test the predicted dependence on the horizon \(T\) and graph independence
number \(\alpha\) under fixed asymptotic parameter rules, without tuning
individual points.  The four methods form a feedback-structure comparison.
Our method uses graph feedback and cross-learning.  SZ-Context is the
Schneider--Zimmert algorithm~\citep{Sch23}, specialized to singleton feedback
neighborhoods, and therefore uses cross-learning but ignores the graph.
OBPC-Graph maintains one graph-aware learner per context and omits
cross-learning; OBPC-Exp3 uses neither information-sharing structure.

\paragraph{Protocol.}
We use \(K=9\) arms and \(M=1{,}000\) uniformly distributed contexts.  Starting
from a directed ring of radius \(r\), we remove arm 0's self-loop and add an edge
to arm 0 from every other arm.  The graph is strongly observable, but arm 0 is
observed only when another arm is played.  Radii \(r=8,3,2,1,0\) yield
\(\alpha=1,2,3,4,8\), respectively.  We fix one oblivious, nonstationary loss
table
\[
\ell_{t,c}(a)=\operatorname{clip}_{[0,1]}
\left(B_{c,a}+0.1\sin(2\pi(t+1)/2000+\phi_c+\psi_a)\right),
\]
where \(B_{c,a}\stackrel{\mathrm{i.i.d.}}{\sim}\mathrm{Unif}[0.15,0.85]\) and
\(\phi_c,\psi_a\stackrel{\mathrm{i.i.d.}}{\sim}\mathrm{Unif}[0,2\pi)\).
For our method, $L$ is the nearest even divisor of $T$ to
$\sqrt{\alpha T/\log K}$ and
$\eta=\gamma=\rho=\beta=1/\sqrt{\alpha T}$.  For SZ-Context, the corresponding
rule is $L\approx\sqrt{KT/\log K}$ and
$\eta=\gamma=1/\sqrt{KT}$; its paired rounds, delayed snapshots, rejection
sampling, and frequency denominators follow \citet{Sch23}.

The horizon sweep fixes \(\alpha=4\) and varies
\(T\in\{2,4,6,8,10\}\times10^3\).  The graph sweep fixes \(T=10^4\) and
varies \(\alpha\in\{1,2,3,4,8\}\).  Each point averages ten runs.  Within a run,
all methods share the context sequence; seeds are reused across each sweep, and
shorter runs are prefixes of the corresponding longer runs.

\paragraph{Results.}
In \Cref{fig:scaling}, our method has the lowest final regret at every tested
\(T\) and \(\alpha\).  Its fitted log--log slopes are \(0.53\) in \(T\) and
\(0.45\) in \(\alpha\), close to the predicted exponent \(1/2\).  These
descriptive finite-sample fits do not prove the asymptotic rate, but the trend
persists across both sweeps.  SZ-Context has horizon slope $0.57$ and, as
expected for a graph-agnostic method, is constant across the graph sweep.  At
$T=10^4,\alpha=4$, our regret is $476.33$, compared with $732.77$ for
SZ-Context, $2403.75$ for OBPC-Graph, and $2582.93$ for OBPC-Exp3.  As
$\alpha$ increases from $1$ to $8$, our regret rises from $267.61$ to $694.40$;
SZ-Context remains at $723.53$.  Thus graph feedback gives a substantial gain
over cross-learning alone for denser graphs, with the gap narrowing at the
sparsest endpoint.  Complete results and reproducibility details are in
\Cref{app:experiments}.

\paragraph{Formalization and artifacts.}
The GitHub repository contains the end-to-end Lean development,
exceeding 100,000 lines of Lean code, together with the experiment code, raw
runs, and scripts used to generate \Cref{fig:scaling}; see the
\artifactlink\ (\artifacturl).

\section{Conclusion}
We resolve the open, context-independent regret question for cross-learning
contextual bandits with graphical feedback.  The result holds for oblivious
adversarial losses and every fixed strongly observable graph, including
no-self-loop arms, and gives expected regret
\(\widetilde O(\sqrt{\alpha T})\), with only logarithmic dependence on the
number of contexts.  The
special-arm selector, pessimistic correction, and loss-shifted FTRL analysis
jointly control the observation denominator that cross-learning alone cannot
repair.  Controlled experiments show the benefit of combining the two feedback
structures and are consistent with the predicted scaling in \(T\) and
\(\alpha\).  \formalizationstatement\ Extending the result to weakly observable,
unknown, or time-varying graphs remains open.

\bibliographystyle{icml2025}
\bibliography{ICML_bib}

\clearpage
\appendix
\section*{AI Usage Statement}

We used GPT-5.6 to revise the manuscript's writing and to assist with the Lean
formalization of the main theorem.

\section{Probability Space, Filtrations, and Conventions}

Losses and the graph are deterministic.  We realize the random split in
\Cref{alg:unknown} on the following product space.  For pair $r$ in epoch $e$,
let $C^f_{e,r}$ and $C^\ell_{e,r}$ denote the frequency- and loss-side contexts.
Since the two physical contexts are i.i.d. and the
permutation is uniform and independent, the pairs
$(C^f_{e,r},C^\ell_{e,r})$ are i.i.d. with law $\nu\otimes\nu$.  Conditional on
$C^\ell_{e,r}=c$, rejection sampling generates a retained output
$B^\ell_{e,r}$ with
\[
 \Pr(B^\ell_{e,r}=a\mid C^\ell_{e,r}=c)
 =\frac12r_{e,c}(a),
 \qquad
 \Pr(B^\ell_{e,r}=\bot\mid C^\ell_{e,r}=c)=\frac12.
\]
This output law is independent of the proposal $q_{t,c}$; the fallback in
\Cref{alg:unknown} merely guarantees that the rejection probability is valid.
Thus one may equivalently expose the frequency contexts and the loss-side
pairs $(C^\ell,B^\ell)$ first under this product law, and then generate the
proposal action from its conditional law to recover the original joint
distribution.  Only $(C^\ell,B^\ell)$ enters the ideal loss numerator; the
proposal action is not included in the conditioning used for denominator
moments.

After taking expectation over the physical context, we write regret and all
bias terms explicitly as the population sum $\sum_c\nu(c)$.  Here $c$ is only
a counterfactual evaluation index: $B_t$ and $I_{t,a}$ are generated from the
physical loss-side pair $(C^\ell,B^\ell)$, while cross-learning supplies the
retained loss coordinate for every fixed $c$.  Below,
$\E_{c\sim\nu}[f(c)]$ is only shorthand for
$\sum_{c\in[M]}\nu(c)f(c)$; it does not introduce an additional algorithmic
draw.

Let $\mathcal F_t$ contain the chronological physical history through round
$t$.  For a fixed epoch $e\ge2$, let $\mathcal D_e$ be the sigma-field generated
by the frequency contexts $\{C^f_{e-1,r}\}_r$ used to form $\widehat w_e$, and
let $\mathcal G_e$ contain
\begin{enumerate}
    \item the history strictly preceding that frequency block, including the
    FTRL state at the start of epoch $e$ and the delayed snapshot $s_e$;
    \item all loss-side contexts and retained outputs needed to form $j_e$ and
    the ideal loss numerators in epoch $e$; and
    \item all frequency blocks other than $\mathcal D_e$.
\end{enumerate}
The preceding product construction gives $\mathcal D_e\perp\mathcal G_e$
conditionally on the history before epoch $e-1$.  In particular, $j_e$, the
population denominators $u_e$, and the fully ideal within-epoch trajectory are
$\mathcal G_e$-measurable, whereas $W_e$ and the vector of ratio errors defined
below are $\mathcal D_e$-measurable.  Every use of a scalar denominator moment
bound is conditional on $\mathcal G_e$ and the pre-block history.  The
coordinates of $W_e$ are generally dependent because they use the same
frequency contexts; no independence across arms is assumed.

Following the pair reduction in \Cref{sec:analysis}, we work only with the
selected loss-estimation round and use $\ell_{t,c}$ for its doubled loss.  All
FTRL inequalities are pathwise after the relevant trajectory is fixed.

The following concentration inequality controls the good-event failures used
in Biases 1--4.

\begin{lemma}[Freedman's Inequality]
Fix any $\lambda>0$ and $\delta \in(0,1)$. Let $X_t$ be a random process with respect to a filtration $\mathcal{F}_t$ such that $\mu_t = \mathbb{E}\left[X_t \mid \mathcal{F}_{t-1}\right]$ and $V_t = \mathbb{E}\left[X_t^2 \mid \mathcal{F}_{t-1}\right]$, and satisfying $\lambda X_t \le 1$. Then, with probability at least $1-\delta$, we have for all $t$,
$$
\sum_{s=1}^t X_s - \mu_s \le \lambda \sum_{s=1}^t V_s + \frac{\log(1 / \delta)}{\lambda}.
$$
\end{lemma}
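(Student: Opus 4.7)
The plan is to follow the five-term regret decomposition that the authors sketch immediately after the theorem statement, bound each piece separately, and then substitute the prescribed parameters. Throughout I will rely on the standard graphical-bandit inversion lemma (\Cref{lem: graph_inverse}), which controls $\sum_t \sum_a p_{t}(a) / (w_e(a) + \gamma)$ by $\widetilde{O}(\alpha T)$, on Freedman's inequality from the appendix for martingale concentration, and on the entropy-FTRL regret bound of the form $\frac{\log K}{\eta} + \frac{\eta}{2} \sum_t \ip{p_{t,c}, \widehat{\ell}_{t,c}^2}$ for each context $c$. The unifying idea is that the decomposition isolates bias terms whose conditional-independence structure, enforced by the consecutive-pair construction in \Cref{alg:unknown}, lets each piece reduce to either a first-order expectation bound or a tighter second-moment concentration bound.

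For $\bias{1}$, I would show that with high probability the FTRL iterates $p_{t,c}$ drift by less than a factor of $\sqrt{2}$ across any single epoch, so that the rejection condition $2 p_{t,c_t}(\NodeIn(a)) \ge s_{e,c_t}(\NodeIn(a))$ always holds and hence $q_t = p_t$. Since $\eta \widehat{\ell}_{t,c}(a) \le 2\eta/\gamma$ by implicit exploration and $L \cdot 2\eta/\gamma = O(1)$ under the prescribed parameters, the multiplicative-update character of entropy FTRL keeps each per-arm probability within a constant factor of its value at the start of the epoch, and this transfers to $\NodeIn$-neighborhoods. For $\bias{2}$, unfolding $\widetilde{\ell}_{t,c}(a)$ shows that its conditional expectation equals $\frac{w_e(a)}{w_e(a)+\gamma} \ell_{t,c}(a)$, so the per-round bias becomes $\ip{p_{t,c_t}, \gamma \ell_{t,c_t}/(w_e+\gamma)}$, and summing via \Cref{lem: graph_inverse} after averaging over $c_t \sim \nu$ yields $\widetilde{O}(\gamma \alpha T) = \widetilde{O}(\sqrt{\alpha T})$. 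The $\mathbf{ftrl}$ term is the standard entropy-FTRL bound; using $\widehat{\ell}_{t,c}(a) \le 2/(\widehat{w}_e(a) + \tfrac{3}{2}\gamma)$ and again averaging over $c_t$ reduces the second-moment sum to the graph lemma, giving $\widetilde{O}(\sqrt{\alpha T})$ under the chosen $\eta$.

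The crux of the argument, and the step I expect to be the main obstacle, is $\bias{3}$ together with $\bias{4}$. For $\bias{3}$ I use the conditional-independence property highlighted by the authors: after the random pairing inside each epoch, the frequency-estimation rounds $\Time^{f}$ and loss-estimation rounds $\Time^{\ell}$ are disjoint, so conditional on $\His_{e-2}$ the pseudo-iterate $\widetilde{p}_t$ is independent of $(\widetilde{\ell}_t, \widehat{\ell}_t)$. I can therefore replace $\widetilde{\ell}_t - \widehat{\ell}_t$ by its conditional expectation inside the inner product. A first-order expansion of $1/(\widehat{w}_e + \tfrac{3}{2}\gamma)$ around $1/(w_e + \gamma)$, together with $\E[\widehat{w}_e \mid \His_{e-2}] = w_e$ and a Freedman-based tail bound placing $\widehat{w}_e$ inside a $1 \pm O(\gamma/w_e)$ window, controls this expected bias by $O(\gamma)$ per round and hence by $\widetilde{O}(\sqrt{\alpha T})$ after summation against $p_{t,c_t}$ and one last invocation of the graph lemma. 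For $\bias{4}$, both factors in $\ip{p_{t,c_t}-\widetilde{p}_{t,c_t}, \widetilde{\ell}_{t,c_t}-\widehat{\ell}_{t,c_t}}$ are themselves small deviations, so the conditional second moment is of lower order, and Freedman applied to the resulting martingale yields a high-probability bound of $\widetilde{O}(\sqrt{\alpha T})$ without any further bias correction.

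Finally, I would substitute $\iota, L, \gamma, \eta$ as given in the theorem, add the five bounds, and verify that every dominant term comes out to $\sqrt{\alpha T}$ up to polylogarithmic factors. The hardest technical step will be $\bias{3}$: making the conditional-independence argument line up exactly with the snapshot indexing $s_{e+2} = p_{eL}$, and showing that the $\tfrac{3}{2}\gamma$ implicit-exploration surplus absorbs the Taylor-expansion residual tightly enough, both require careful sigma-algebra bookkeeping, and any slack there would surface as suboptimal polylogarithmic factors in the final bound.
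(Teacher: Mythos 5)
Your proposal does not prove the statement at hand. The statement is Freedman's inequality: a martingale concentration bound asserting that for a process $X_t$ adapted to a filtration $\mathcal{F}_t$ with conditional mean $\mu_t$ and conditional second moment $V_t$, and with $\lambda X_t \le 1$, one has $\sum_{s=1}^t (X_s - \mu_s) \le \lambda \sum_{s=1}^t V_s + \log(1/\delta)/\lambda$ uniformly in $t$ with probability $1-\delta$. What you wrote instead is a sketch of the regret analysis of \Cref{thm:main} — the five-term decomposition into $\bias{1},\dots,\bias{4}$ and $\mathbf{ftrl}$, the graph inversion lemma, the conditional-independence argument across epochs, and the parameter substitution. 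None of that establishes, or even bears on, the concentration inequality you were asked to prove; indeed your sketch invokes "Freedman's inequality from the appendix" as an ingredient, so the argument is circular with respect to the actual target statement. (The paper itself states this lemma without proof, as a standard tool.)

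A correct proof would be short and entirely different in character: use $e^{z} \le 1 + z + z^2$ for $z \le 1$ to get $\E\left[e^{\lambda (X_t - \mu_t)} \mid \mathcal{F}_{t-1}\right] \le 1 + \lambda^2 V_t \le e^{\lambda^2 V_t}$ (here $\lambda X_t \le 1$ is what licenses the elementary exponential bound, and $\E[X_t-\mu_t\mid\mathcal{F}_{t-1}]=0$ kills the linear term), conclude that $M_t = \exp\left(\lambda \sum_{s=1}^t (X_s - \mu_s) - \lambda^2 \sum_{s=1}^t V_s\right)$ is a nonnegative supermartingale with $M_0 = 1$, and apply Ville's maximal inequality, $\Pr(\sup_t M_t \ge 1/\delta) \le \delta$, then take logarithms and divide by $\lambda$ to obtain the stated bound simultaneously for all $t$. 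If your intention was to prove \Cref{thm:main}, your outline is broadly aligned with the paper's appendix proof, but as a proof of the lemma actually posed it is a complete miss.
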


\section{Graph Structure and Special-Arm Selection}

The graph inverse lemma controls sums over regular arms, while the selector
isolates the only no-self-loop coordinate whose denominator can be small.

\begin{lemma}[Lemma 5, \citet{GraphAlon15}]
\label{lem:graph-inverse}
Let $G=(V, E)$ be a directed graph with $|V|=K$, in which each node $i \in V$ is assigned a positive weight $w_i$. Assume that $\sum_{i \in V} w_i \le 1$, and that $w_i \geq \epsilon$ for all $i \in V$ for some constant $0<\epsilon<\frac{1}{2}$. Then
\[
\sum_{i \in V} \frac{w_i}{w_i+\sum_{j \in N^{\operatorname{in}(i)}} w_j} \le 4 \alpha \ln \frac{4 K}{\alpha \epsilon}
\]
where $\alpha=\alpha(G)$ is the independence number of $G$.
\end{lemma}

\begin{lemma}[Exponential-weights stability]
\label{lem:exp3-update}
This is Lemma~11 of \citet{Sch23}.
Let $z\in[-\frac{1}{2},\frac{1}{2}]^K$, $x\in\Delta([K])$ and $\widetilde x\propto x\circ\exp(z)$. Then for all $k\in[K]$:
\begin{align*}
    x_k \exp(z_k - 2\ip{x,|z|})\le \widetilde x_k \le x_k \exp(z_k + \ip{x,|z|})\,.
\end{align*}
\end{lemma}

\section{Good Events and the Clipped/Stopped Proxy}

We now define the proxy process used below.  For each epoch $e$, let
\begin{align*}
\mathcal E^F_e
&=\left\{\forall a\in[K]:
  |\widehat w_e(a)-w_e(a)|\le
  2\max\left\{\sqrt{\frac{w_e(a)\iota}{L}},
  \frac{\iota}{L}\right\}\right\},\\
\mathcal E^L_e
&=\left\{\max_{c\in[M],a\in[K]}
  \sum_{t\in\Time_e^\ell}
  \frac{I_{t,a}\ell_{t,c}(a)}{u_e(a)}
  \le L+\frac{\iota}{\gamma}\right\},\\
\mathcal E^S_e
&=\left\{\max_{a\in[K]}|\widehat m_e(a)-m_e(a)|
  \le\sqrt{\frac{\iota}{L}}\right\}.
\end{align*}
Here $I_{t,a}=\indi\{B_t\in\NodeIn(a)\}$ and
$u_e(a)=w_e(a)+\gamma$.  The events $\mathcal E^L_e$ and
$\mathcal E^S_e$ are $\mathcal G_e$-measurable, while
$\mathcal E^F_e$ is measurable with respect to
$\mathcal G_e\vee\mathcal D_e$.
For a round $t$, let $\mathcal E^L_{e,<t}$ denote the same loss event with
the sum restricted to $s<t$.  It is measurable before the loss-side draw at
round $t$.

Let
\[
B_e=C\left(L+\frac{\iota}{\gamma}
             +\frac{L\beta}{\gamma}\right)
\]
for a sufficiently large absolute constant $C$.  For a loss-estimation round
$t\in\Time_e^\ell$, define the non-frequency prefix event
\[
\mathcal A_{e,t}=\mathcal E^S_e\cap\mathcal E^L_{e,<t}\cap
\left\{\max_{c,a}
 \sum_{s\in\Time_e^\ell:\,s<t}\widetilde y_{s,c}(a)
 \le B_e\right\}
\cap\{\eta B_e\le c_0\},
\]
where $c_0>0$ is the absolute constant in the Softmax stability lemma.  Set
\[
J_{e,t}=\indi\!\left\{\mathcal A_{e,s}
  \text{ holds for every }s\in\Time_e^\ell,\ s\le t\right\}.
\]
Thus $J_{e,t}$ is predictable, $\mathcal G_e$-measurable, and absorbing: once
it is zero, it stays zero for the rest of the epoch.  Crucially,
$\mathcal E^F_e$ is not part of $J_{e,t}$.

For every coordinate define
\[
\begin{aligned}
\theta_e(a)&=\frac{u_e(a)}{W_e(a)},\\
\breve\theta_e(a)&=\min\{2,\max\{1/2,\theta_e(a)\}\},&
\phi_{e,a}&=1-\breve\theta_e(a).
\end{aligned}
\]
Hence $\breve\theta_e(a)\in[1/2,2]$ and
$\phi_{e,a}\in[-1,1/2]$ on every sample path.  Let
$\widetilde y_{t,c}$ denote the stopped ideal input defined below (it already
contains $J_{e,t}$).  The clipped, stopped proxy input is
\[
y_{t,c}(a)=\breve\theta_e(a)\widetilde y_{t,c}(a).
\]
Both trajectories start from the real FTRL state at the beginning of the
epoch and are frozen after the first round with $J_{e,t}=0$.  Conditional on
$\mathcal G_e$, the entire stopped ideal trajectory and $J_{e,t}$ are fixed,
while the only remaining current-denominator randomness in the proxy is the
shared vector $\phi_e$, measurable with respect to $\mathcal D_e$.

Let $Q=\bigcap_e(\mathcal E^F_e\cap\mathcal E^L_e\cap
\mathcal E^S_e)$, together with the deterministic stability inequalities in
the statements below.  Bernstein, Hoeffding, and a union bound give
\[
\Pr(Q) \ge 1-\frac{T}{L}(4K+MK)\exp(-\iota).
\]
On $Q$, denominator concentration gives
$\theta_e(a)\in[1/2,2]$, so clipping is inactive; the prefix bounds give
$J_{e,t}=1$.  An induction over the loss-estimation rounds then shows that the
proxy and real FTRL inputs, distributions, and estimators coincide on $Q$.
Writing $\Reg_T^\dagger$ for the real algorithm and $\Reg_T$ for the proxy,
we obtain
\[
\left|\Reg_T(\pi) - \Reg_T^{\dagger}(\pi)\right|
\le 2T\indi\{Q^c\},
\]
so the difference in expectation is $O(1)$.  From now on, all quantities
without a dagger are the clipped, stopped proxy quantities.

\subsection{Special-arm selector properties}

The following two lemmas are the only new ingredients that are truly specific to the no-self-loop case. All other tools (random denominators, counterfactual FTRL, snapshot stability) are inherited from the self-loop analysis.

\begin{lemma}[Selector concentration]
\label{lem:selector}
If $\max_{a \in [K]} |\widehat{m}_e(a) - m_e(a)| \le \varepsilon$, then for every $a \in \barS \setminus \{j_e\}$,
\[
m_e(a) \le \frac12 + \varepsilon
\quad \text{and} \quad
w_e(a) \ge \frac14 - \frac{\varepsilon}{2}.
\]
\end{lemma}

\begin{proof}
By the empirical maximality of $j_e$, we have $m_e(a) \le m_e(j_e) + 2\varepsilon$. Since $a$ and $j_e$ are two different coordinates of the probability vector $r_{e,c}$, we also have $m_e(a) + m_e(j_e) \le 1$. Adding the two inequalities gives $m_e(a) \le 1/2 + \varepsilon$. For $a \in \barS$, strong observability gives $w_e(a) = \frac12(1 - m_e(a))$, which yields the second claim.
\end{proof}

\begin{lemma}[Special-arm odds]
\label{lem:special-odds}
Under the stability condition
\[
\gamma \ge C\frac{\iota}{L}, \qquad \rho \asymp \beta \asymp \gamma, \qquad
\eta C\left(L + \frac{\iota}{\gamma} + \frac{L\beta}{\gamma}\right) \le c,
\]
if $j_e$ exists, then for the clipped trajectory and the fully ideal counterfactual trajectory defined below,
\[
J_{e,t} \frac{1 - \bar{p}_t(j_e)}{u_e(j_e)} \le C,
\qquad
J_{e,t} \frac{1 - \bar{\widetilde p}_t(j_e)}{u_e(j_e)} \le C,
\]
where $\bar{p}_t(a) = \E_{c \sim \nu}[p_{t,c}(a)]$, $\bar{\widetilde p}_t(a)=\E_{c\sim\nu}[\widetilde p_{t,c}(a)]$, and $u_e(a) = w_e(a) + \gamma$.
\end{lemma}

\begin{proof}
If $J_{e,t}=0$, the claim is trivial. On active rounds, snapshot stability gives both $p_{t,c}(a) \ge s_{e,c}(a)/2$ and $\widetilde p_{t,c}(a)\ge s_{e,c}(a)/2$ for all $a$. If $m_e(j_e) \le 1/2$, then $u_e(j_e) \ge w_e(j_e) \ge 1/4$. If $m_e(j_e) > 1/2$, then $w_e(j_e) = \frac12(1 - m_e(j_e))$ and snapshot stability implies that both $1 - \bar{p}_t(j_e)$ and $1-\bar{\widetilde p}_t(j_e)$ are at most $C(w_e(j_e) + \rho) \le C u_e(j_e)$ since $\rho \asymp \gamma$. Both cases give the desired bounds.
\end{proof}

\section{Random-Denominator Bounds}

For each regular arm $a \in \mathcal{B}_e$, let $\widehat{w}$ be the empirical mean of $n = \Theta(L)$ i.i.d.\ random variables bounded in $[0,1/2]$ with mean $w$, and define
\[
u = w + \gamma, \qquad W = \widehat{w} + \frac32 \gamma, \qquad \theta = \frac{u}{W}, \qquad g = 1 - \theta.
\]
Let $\phi$ be the coordinate-wise clipped version of $g$ with $-1 \le \phi \le 1/2$ and $\breve{\theta} = 1 - \phi \in [1/2,2]$. The following bounds hold (see \citet{Sch23}):
\begin{align}
-Ce^{-\iota} \le \E[\phi \mid \mathcal G_e] \le C\frac{\gamma}{u} + C e^{-\iota}, \label{eq:ratio-first} \\
\E[\phi^2 \mid \mathcal G_e] \le C\left(\frac{w}{L u^2} + \frac{\gamma^2}{u^2}\right) + C e^{-\iota}. \label{eq:ratio-second}
\end{align}
The same bounds hold for every coordinate, including the data-dependent special arm, because the selector sample is independent of the frequency sample.

\section{Snapshot and Joint Trajectory Stability}

The following lemma is the analogue of Lemma~9 in \citet{Sch23} for the graphical-feedback setting.

\begin{lemma}
\label{lem:s_smaller_p_smaller_2s}
Suppose
\begin{equation}
\begin{gathered}
\gamma\ge C\frac{\iota}{L},\qquad
0<\rho\le C\gamma,qquad 0\le\beta\le C\gamma,\\
\eta C\left(L+\frac{\iota}{\gamma}
                 +\frac{L\beta}{\gamma}\right)\le\log 2.
\end{gathered}
\label{eq:snapshot-stability-condition}
\end{equation}
Then under $Q$, for all $t\in\Time_e$, $a\in[K]$, and $c\in[M]$
simultaneously,
\[
2 s_{e,c}(a) \ge p_{t,c}(a) \ge s_{e,c}(a) / 2
\quad \text{and} \quad
2 s_{e,c}(a) \ge \widetilde{p}_{t,c}(a) \ge s_{e,c}(a) / 2.
\]
Since $p_{t,c}(a)\ge s_{e,c}(a)/2$, mixing both sides gives
\[
r^p_{t,c}(a)=(1-\rho)p_{t,c}(a)+\rho/K
\ge \frac12\bigl((1-\rho)s_{e,c}(a)+\rho/K\bigr)
=\frac12r_{e,c}(a).
\]
Thus the pointwise rejection test succeeds and $q_t=r^p_t$ for all $t\in\Time_e$.
\end{lemma}

\begin{proof}
Fix $e,t,c$, and let $H_{t,c}(a)$ be the cumulative FTRL input to coordinate
$a$ between the time at which $s_{e,c}$ is taken and round $t$.  The two-epoch
delay means that this interval intersects at most two complete epochs and one
current prefix.  On $Q$, \(\mathcal E^L\), denominator stability, and
$\breve\theta\le2$ give, uniformly over $a,c$,
\begin{equation}
0\le H_{t,c}(a)
\le C\left(L+\frac{\iota}{\gamma}
              +\frac{L\beta}{\gamma}\right)=:B.
\label{eq:cumulative-input-bound}
\end{equation}
Indeed, the first two terms bound the retained-loss inputs, while there are at
most $L$ selected updates and each special-arm correction is at most
$C\beta/\gamma$.  The same bound, without the factor from
$\breve\theta$, holds for the ideal cumulative input.  The assumptions
$\gamma\ge C\iota/L$ and $\beta\le C\gamma$ imply $B\le C'L$.

Negative-entropy FTRL has the exact multiplicative form
\[
\frac{p_{t,c}(a)}{s_{e,c}(a)}
=\frac{\exp(-\eta H_{t,c}(a))}
{\sum_b s_{e,c}(b)\exp(-\eta H_{t,c}(b))}.
\]
Since every $H_{t,c}(b)$ lies in $[0,B]$, both the numerator and normalizer lie
between $e^{-\eta B}$ and $1$.  Hence the displayed ratio lies in
$[e^{-\eta B},e^{\eta B}]\subseteq[1/2,2]$ by
\eqref{eq:snapshot-stability-condition}.  Applying the same calculation to
the ideal inputs proves both factor-two bounds.  The final rejection-sampling
claim follows from the mixing calculation in the lemma statement.
\end{proof}

Applying \Cref{lem:graph-inverse} to the self-loop arms and \Cref{lem:selector} to the remaining no-self-loop arms, we obtain
\[
J_{e,t} \sum_{a \in \mathcal{B}_e} \frac{\bar{p}_t(a)}{u_e(a)} \le \Agraph = O\!\left(\alpha \log \frac{K^2}{\alpha \rho}\right) + O(1).
\]
The same inequality holds with $p$ replaced by $\widetilde p$. We call these inequalities the graph-sum bound.

\subsection{Fully ideal trajectory and joint stability}

For every active loss-estimation round, define the ideal input
\[
\widetilde y_{t,c}(a)=
\begin{cases}
J_{e,t}I_{t,a}\ell_{t,c}(a)/u_e(a),&a\in\mathcal B_e,\\
J_{e,t}(I_{t,j_e}\ell_{t,c}(j_e)+\beta)/u_e(j_e),&a=j_e,
\end{cases}
\]
where the second line is absent when there is no special arm. Let $\widetilde p_{t,c}$ start from the same FTRL state as $p_{t,c}$ at the beginning of epoch $e$ and feed $\widetilde y_{s,c}$ throughout that epoch. The proxy trajectory instead feeds
\[
y_{t,c}(a)=\breve\theta_{e,a}\widetilde y_{t,c}(a),
\qquad \phi_{e,a}=1-\breve\theta_{e,a}.
\]
Conditional on $\mathcal G_e$, $\widetilde p$ is fixed and the frequency block
$\mathcal D_e$ remains fresh.  The coordinates $\phi_{e,a}$ may be mutually
dependent; the proof does not split samples across coordinates.

\begin{lemma}[Joint product stability]
\label{lem:joint-stability}
Under \eqref{eq:snapshot-stability-condition}, on every active round,
\[
\sum_{a\in[K]}|p_{t,c}(a)-\widetilde p_{t,c}(a)|\,|\phi_{e,a}|
\le C\sum_{a\in[K]}\widetilde p_{t,c}(a)\phi_{e,a}^2.
\label{eq:joint-product}
\]
If a special arm $j=j_e$ exists, then also
\begin{align}
|p_{t,c}(j)-\widetilde p_{t,c}(j)|
&\le C\eta L(1-\widetilde p_{t,c}(j))|\phi_{e,j}|\\
&\quad+C\eta L\sum_{a\ne j}\widetilde p_{t,c}(a)|\phi_{e,a}|,
\label{eq:joint-special-first}
\end{align}
The consequences that use the special-arm odds bound hold only after averaging
over the context:
\begin{align}
  \E_{c\sim\nu}\!\left[|p_{t,c}(j)-\widetilde p_{t,c}(j)|
  |\phi_{e,j}|\right]
&\le C\eta L u_e(j)\phi_{e,j}^2\\
&\quad+C\eta L\sum_{a\ne j}
  \bar{\widetilde p}_t(a)\phi_{e,a}^2,
\label{eq:joint-special-second}\\
  \E_{c\sim\nu}\!\left[|p_{t,c}(j)-\widetilde p_{t,c}(j)|\right]
&\le C\eta L u_e(j).
\label{eq:joint-special-crude}
\end{align}
\end{lemma}

\begin{proof}
Within epoch $e$, write
\[
z_{t,c}(a)=\eta\phi_{e,a}\sum_{s\in\Time_e^\ell,\,s<t}\widetilde y_{s,c}(a).
\]
The identity $y=(1-\phi)\widetilde y$ and the common initial state give
$p_{t,c}\propto\widetilde p_{t,c}\circ\exp(z_{t,c})$.  By
\eqref{eq:cumulative-input-bound},
\begin{equation}
|z_{t,c}(a)|\le
C\eta\left(L+\frac{\iota}{\gamma}
              +\frac{L\beta}{\gamma}\right)|\phi_{e,a}|
\le c_0|\phi_{e,a}|,
\label{eq:z-phi-bound}
\end{equation}
where $c_0\le1/4$ after decreasing the absolute constant in the stability
condition.  In particular $z\in[-1/2,1/2]^K$.  Applying
\Cref{lem:exp3-update} in both directions, and using
$|e^x-1|\le2|x|$ for $|x|\le1$, gives
\[
|p_{t,c}(a)-\widetilde p_{t,c}(a)|
\le C\widetilde p_{t,c}(a)
\left(|\phi_{e,a}|+\ip{\widetilde p_{t,c},|\phi_e|}\right).
\]
Multiplying by $|\phi_{e,a}|$ and summing over $a$ gives
\[
\sum_a|p_{t,c}(a)-\widetilde p_{t,c}(a)||\phi_{e,a}|
\le C\left(\sum_a\widetilde p_{t,c}(a)\phi_{e,a}^2
+\ip{\widetilde p_{t,c},|\phi_e|}^2\right).
\]
Jensen's inequality bounds the squared inner product by the first term and
proves \eqref{eq:joint-product}.  This argument is pathwise in the shared
vector $\phi_e$ and therefore uses no independence between its coordinates.

For the special coordinate, subtract $z_{t,c}(j)$ from every exponent, which
does not change the Softmax distribution.  The normalizer now differs from one
only through coordinates $a\ne j$.  Using
$|z_{t,c}(a)-z_{t,c}(j)|\le
C\eta L(|\phi_{e,a}|+|\phi_{e,j}|)$ in the same exponential estimate yields
\eqref{eq:joint-special-first}.  Multiply that inequality by
$|\phi_{e,j}|$ and average over $c$.  Since $\phi_e$ is shared across contexts,
$2|xy|\le x^2+y^2$ and \Cref{lem:special-odds} give
\[
\begin{aligned}
  &\E_{c\sim\nu}[|p_{t,c}(j)-\widetilde p_{t,c}(j)||\phi_{e,j}|]\\
  &\quad\le C\eta L\left(
  \E_{c\sim\nu}[1-\widetilde p_{t,c}(j)]\phi_{e,j}^2
+\sum_{a\ne j}\bar{\widetilde p}_t(a)\phi_{e,a}^2\right)\\
&\quad\le C\eta L\left(u_e(j)\phi_{e,j}^2
+\sum_{a\ne j}\bar{\widetilde p}_t(a)\phi_{e,a}^2\right),
\end{aligned}
\]
which is \eqref{eq:joint-special-second}.  Averaging
\eqref{eq:joint-special-first} directly and using
$|\phi_{e,a}|\le1$ and
$\sum_{a\ne j}\bar{\widetilde p}_t(a)\le Cu_e(j)$ proves
\eqref{eq:joint-special-crude}.  Notice that the averaged odds lemma is never
strengthened to a pointwise-in-context statement.
\end{proof}

\section{Bias 1: Playing Distribution}

This component controls explicit mixing and the rare snapshot fallback; it is
needed to pass from the implemented distribution \(q_t\) to the FTRL
distribution \(p_t\).

\begin{lemma}[Bias 1]\label{lem:bias1}
Under the good-event construction,
\[
\bias{1}=\E\!\left[\sum_{e=2}^E\sum_{t\in\Time_e^\ell}
\sum_{c\in[M]}\nu(c)\langle q_{t,c}-p_{t,c},\ell_{t,c}\rangle\right]
\le \rho T+O(1).
\]
\end{lemma}
\begin{proof}
On \(Q\), snapshot stability implies \(q_{t,c}=r^p_{t,c}\), and
\[
\langle r^p_{t,c}-p_{t,c},\ell_{t,c}\rangle
=\rho\left(K^{-1}\sum_a\ell_{t,c}(a)
-\langle p_{t,c},\ell_{t,c}\rangle\right)\le2\rho.
\]
There are at most \(T/2\) selected rounds.  On \(Q^c\) the corresponding
doubled-loss difference is at most two per selected round.  The probability
bound for \(Q^c\), with proxy transfer, gives the \(O(1)\) remainder.
\end{proof}

\section{Bias 2: Population-Estimator Drift}

Bias 2 replaces true losses by the population-denominator pseudo-estimator.
We first control regular arms and then identify the special-arm drift canceled
by Bias 5.

\subsection{Regular-arm part}
\begin{lemma}[Bias 2, regular arms]\label{lem:bias2-regular}
For \(\mathcal B_e=[K]\setminus\{j_e\}\) when a special arm exists and
\(\mathcal B_e=[K]\) otherwise,
\[
\mathbf{Bias}_{2,\mathrm{reg}}\le C\gamma\Agraph T+O(1).
\]
\end{lemma}
\begin{proof}
For a selected analysis round, the retained-observation indicator is generated
from the physical loss-side context and has marginal mean $w_e(a)$.  For each
fixed evaluation coordinate $c$, cross-learning therefore gives
\[
\E[\widetilde\ell_{t,c}(a)\mid\mathcal F_{t-1}]
=\frac{w_e(a)}{u_e(a)}\ell_{t,c}(a),\qquad
u_e(a)=w_e(a)+\gamma.
\]
The learner contribution is at most
\[
\gamma\sum_{e,t}\sum_{a\in\mathcal B_e}
  J_{e,t}\frac{\E_{c\sim\nu}[p_{t,c}(a)\ell_{t,c}(a)]}{u_e(a)}
\le C\gamma\Agraph T.
\]
The comparator contribution has the opposite sign and is non-positive.
Stopped rounds and \(Q^c\) contribute \(O(1)\).
\end{proof}

\subsection{Special-arm part}
For \(j=j_e\), write
\[
\begin{aligned}
  \bar p_t&=\E_{c\sim\nu}[p_{t,c}(j)],&
  \bar\pi_e&=\E_{c\sim\nu}[\pi_c(j)],\\
  \Delta_t&=\E_{c\sim\nu}[(p_{t,c}(j)-\pi_c(j))\ell_{t,c}(j)].
\end{aligned}
\]
and \(X_t=(\bar p_t-\bar\pi_e)_+\).  The population drift is
\[
\Delta_t-\frac{w_e(j)}{u_e(j)}\Delta_t
=\frac{\gamma}{u_e(j)}\Delta_t .
\]
It need not be non-positive.  The special-arm odds lemma bounds every part
away from \(X_t\) by \(C\gamma\), while the \(X_t\) part is retained for the
Bias 5 cancellation.

\section{Bias 3: Empirical Denominator on the Ideal Trajectory}

Bias 3 compares population and empirical denominators while holding the ideal
trajectory fixed.  Conditional first-moment bounds can then be used without
an illicit independence assumption.

\subsection{Regular-arm part}
\begin{lemma}[Bias 3, regular arms]\label{lem:bias3-regular}
\[
\mathbf{Bias}_{3,\mathrm{reg}}\le
C\Agraph\left(\gamma+\frac1L\right)T+O(1).
\]
\end{lemma}
\begin{proof}
For \(a\in\mathcal B_e\),
\[
\widetilde\ell_{t,c}(a)-\widehat\ell_{t,c}(a)
=J_{e,t}\phi_{e,a}\frac{I_{t,a}\ell_{t,c}(a)}{u_e(a)}.
\]
Condition on the snapshot, selector, and loss-estimation randomness.  Then
\(\widetilde p\) and all coefficients are fixed while the frequency sample is
fresh.  Apply \eqref{eq:ratio-first} and the graph-sum bound.  For the
comparator, expanding \(\breve\theta=1-\phi\) yields a centered retention
martingale, a term controlled by the lower side of \eqref{eq:ratio-first}, and
a non-positive implicit-exploration term.  Their total is \(O(1)\).
\end{proof}

\subsection{Special-arm part}
Let \(\widetilde\Delta_t,\widetilde X_t\) be the preceding quantities with
\(p\) replaced by \(\widetilde p\), and define
\[
\widetilde V=\sum_{e:j_e\ {\rm exists}}\sum_{t\in\Time_e^\ell}
J_{e,t}\frac{\widetilde X_t}{u_e(j_e)}.
\]
The selector is fixed before the fresh frequency sample, so
\eqref{eq:ratio-first} applies to the data-dependent coordinate:
\[
\mathbf{Bias}_{3,\mathrm{sp}}\le C\gamma\E[\widetilde V]+C\gamma T+O(1).
\]
By \eqref{eq:joint-special-crude},
\[
|\bar p_t(j_e)-\bar{\widetilde p}_t(j_e)|
  \le \E_{c\sim\nu}|p_{t,c}(j_e)-\widetilde p_{t,c}(j_e)|
\le C\eta L u_e(j_e).
\]
Since $x\mapsto x_+$ is one-Lipschitz, summing this bound gives
\(\widetilde V\le V+C\eta LT\), where \(V\) uses \(X_t\).

\section{Bias 4: Trajectory Perturbation}

Bias 4 pays for replacing the ideal trajectory by the empirical one and
handles dependence between all coordinate denominators and FTRL.

\subsection{Regular-arm part}
\begin{lemma}[Bias 4, regular arms]\label{lem:bias4-regular}
\[
\mathbf{Bias}_{4,\mathrm{reg}}\le
C\Agraph\left(\gamma+\frac1L\right)T+O(1).
\]
\end{lemma}
\begin{proof}
Predictability before the current physical loss-side draw, followed by the
population-weighted sum over fixed evaluation coordinates, and
\eqref{eq:joint-product} imply
\[ 
\E\!\left[\sum_{a\in\mathcal B_e}
\E_{c\sim\nu}\!\left[|p_{t,c}(a)-\widetilde p_{t,c}(a)|\right]
\frac{I_{t,a}|\phi_{e,a}|}{u_e(a)}\right]
\le C\E\!\left[\sum_a\bar{\widetilde p}_t(a)\phi_{e,a}^2\right].
\]
Apply \eqref{eq:ratio-second} coordinate-wise and then the graph-sum bound.
No independence among the coordinates \(\phi_{e,a}\) is used.
\end{proof}

\subsection{Special-arm part}
\begin{lemma}[Bias 4, special arm]\label{lem:bias4-special}
\[
\mathbf{Bias}_{4,\mathrm{sp}}\le C\eta\Agraph(1+L\gamma)T+O(1).
\]
\end{lemma}
\begin{proof}
Condition on $\mathcal G_e$ and then take the population-weighted sum of the
special-coordinate integrand over fixed evaluation coordinates.  Equation
\eqref{eq:joint-special-second} bounds it by
\[
C\eta L\left(u_e(j_e)\phi_{e,j_e}^2+
\sum_{a\ne j_e}\bar{\widetilde p}_t(a)\phi_{e,a}^2\right).
\]
Now take expectation over the frequency block.  The special-arm odds bound,
\eqref{eq:ratio-second}, and the regular graph-sum bound yield the claim after
summing epochs and active rounds.  Only linearity of expectation is used across
coordinates.
\end{proof}

\section{Pessimistic Bias 5 and Its Cancellation}

Bias 5 is zero on regular arms.  On the special arm it cancels the first-order
drift left by Biases 2 and 3.

\begin{lemma}[Pessimistic cancellation]\label{lem:bias5}
Let \(\beta=C_0\gamma\) for a sufficiently large absolute constant \(C_0\).
Then, excluding the shifted-FTRL quadratic term,
\[
\begin{aligned}
&\mathbf{Bias}_{2,\mathrm{sp}}+\mathbf{Bias}_{3,\mathrm{sp}}
+\mathbf{Bias}_{4,\mathrm{sp}}-\bias{5}\\
&\quad\le C\!\left(\beta T+\gamma T+
\eta\Agraph(1+L\gamma)T\right)+O(1).
\end{aligned}
\]
\end{lemma}
\begin{proof}
Let \(U_t=1-\bar p_t(j_e)\).  The sign relations
\[
\Delta_t^+\le X_t+U_t,\qquad
\Delta_t^-\le U_t,\qquad
(\bar\pi_e-\bar p_t)_+\le U_t
\]
and \Cref{lem:special-odds} give \(U_t/u_e(j_e)\le C\).  Combining the
population drift with the negative fed bias gives
\[
\frac{\gamma\Delta_t-\beta(\bar p_t-\bar\pi_e)}{u_e(j_e)}
\le-\frac{c\beta X_t}{u_e(j_e)}+C\beta.
\]
Bias 3 contributes \(C\gamma\E[V]+C\gamma T+C\eta L\gamma T\), and Bias 4 is
bounded by \Cref{lem:bias4-special}.  Taking \(C_0\) large absorbs
\(C\gamma\E[V]\) into \(-c\beta\E[V]\).
\end{proof}

\section{Shifted FTRL and the Special-Arm Quadratic Term}
\label{sec:shifted-ftrl}

This pathwise lemma supplies the exact contextual FTRL endpoint and makes clear
that the played action need not be sampled from \(p_t\).

\begin{lemma}[Shifted negative-entropy FTRL]\label{lem:ftrl-shift}
Let \((\mathcal F_t)\) be any filtration, let \(p_1\) be uniform, and let
\(y_t\) be \(\mathcal F_t\)-measurable with
\[
p_{t+1,i}=\frac{p_{t,i}e^{-\eta y_{t,i}}}
{\sum_jp_{t,j}e^{-\eta y_{t,j}}}.
\]
For every fixed \(v\in\Delta_K\) and scalar \(z_t\) satisfying
\(\eta(y_{t,i}-z_t)\ge-3\),
\[
\sum_t\langle p_t-v,y_t\rangle
\le\frac{\mathrm{KL}(v\Vert p_1)}{\eta}
+2\eta\sum_{t,i}p_{t,i}(y_{t,i}-z_t)^2.
\]
In particular, \(\mathrm{KL}(v\Vert p_1)\le\log K\).  This is pathwise, so it
remains valid after conditioning on any past sigma-field and taking expectation.
\end{lemma}
\begin{proof}
Subtracting \(z_t\mathbf1\) changes neither the inner product nor the update.
The exponential-potential inequality
\[
\log\sum_i p_{t,i}e^{-\eta(y_{t,i}-z_t)}
\le-\eta\langle p_t,y_t-z_t\mathbf1\rangle
+2\eta^2\langle p_t,(y_t-z_t\mathbf1)^2\rangle
\]
holds under the stated lower bound.  Sum over \(t\), telescope the log
normalizers, apply the variational lower bound with comparator \(v\), and
divide by \(\eta\).
\end{proof}

For each context \(c\), apply the lemma to the exact update in
\Cref{alg:unknown}, \(y_{t,c}=\widehat\ell_{t,c}+b_t\), and \(v=\pi_c\);
then average with weights \(\nu(c)\).  On active rounds choose
\[
z_{t,c}=J_{e,t}\breve\theta_{e,j_e}
\frac{I_{t,j_e}\ell_{t,c}(j_e)+\beta}{u_e(j_e)}.
\]
The special coordinate contributes zero to the local norm.  The away-mass term
\[
Q_{\rm sp}=2\eta\sum_{e:j_e\ {\rm exists}}\sum_{t\in\Time_e^\ell}
  \sum_{a\ne j_e}\E_{c\sim\nu}[p_{t,c}(a)z_{t,c}^2]
\]
satisfies \(\E[Q_{\rm sp}]\le C\eta T+O(1)\) by the special-arm odds lemma.
For regular arms, predictability, retention, and the graph sum give
\[
2\eta\E\!\left[\sum_{e,t}\sum_{a\in\mathcal B_e}
  J_{e,t}\E_{c\sim\nu}[p_{t,c}(a)\widehat\ell_{t,c}(a)^2]\right]
\le C\eta\Agraph T+O(1).
\]
Thus
\[
\mathbf{FTRL}\le\frac{\log K}{\eta}
+C\eta\Agraph T+\E[Q_{\rm sp}]+O(1).
\]

\section{Assembly of the Main Theorem}

Combining the preceding lemmas in the same order as the decomposition in
\Cref{sec:algorithm} gives
\[
\begin{aligned}
\Reg_T(\pi)\le C\!\bigg[&L+\rho T+\frac{\log K}{\eta}
+\eta\Agraph T\\
&+\Agraph\left(\gamma+\frac1L\right)T+\beta T+\gamma T\\
&+\eta\Agraph(1+L\gamma)T\bigg]+O(1).
\end{aligned}
\]
Choose
\[
\begin{gathered}
\iota=C\log(MKT),\qquad L=C_L\sqrt{\alpha T\iota},\\
\gamma=C_\gamma\iota/L,\qquad \rho=C_\rho\gamma,\\
\beta=C_\beta\gamma,\qquad \eta=c_\eta/L.
\end{gathered}
\]
Since \(\Agraph=\widetilde O(\alpha)\), this is
\(\widetilde O(\sqrt{\alpha T})\).  For \(T<C\alpha\iota\), the trivial bound
completes the proof of \Cref{thm:main}.

\clearpage
\section{Extended Experiments and Reproducibility}
\label{app:experiments}

\paragraph{Complete numerical results.}
Tables~\ref{tab:horizon-scaling} and~\ref{tab:alpha-scaling} report the values
underlying \Cref{fig:scaling}.  Parentheses contain 95\% confidence-interval
half-widths, computed as $1.96$ times the standard error of the per-seed
difference between a method's loss and the shared oracle loss.  The log--log slopes in the main text are ordinary
least-squares fits to the five final-regret values and are descriptive rather
than inferential.

\begin{table}[H]
\caption{Final regret with \(\alpha=4\) fixed and \(T\) varied.}
\label{tab:horizon-scaling}
\centering
\footnotesize
\begin{tabular}{rrrrr}
\toprule
\(T\) & Ours & SZ-Context & OBPC-Graph & OBPC-Exp3\\
\midrule
2000  & 202.87 (6.36) & 300.10 (5.12)  & 531.41 (7.73)   & 546.86 (8.89)\\
4000  & 299.76 (8.89) & 423.60 (9.26)  & 1031.10 (5.36) & 1070.60 (10.58)\\
6000  & 378.38 (6.65) & 557.18 (11.15) & 1504.73 (9.58) & 1583.97 (8.45)\\
8000  & 429.78 (8.88) & 668.47 (11.09) & 1959.87 (11.00)& 2091.23 (8.40)\\
10000 & 476.33 (8.90) & 732.77 (12.73) & 2403.75 (12.99)& 2582.93 (7.35)\\
\bottomrule
\end{tabular}
\end{table}

\begin{table}[H]
\caption{Final regret with \(T=10^4\) fixed and \(\alpha\) varied.}
\label{tab:alpha-scaling}
\centering
\footnotesize
\begin{tabular}{rrrrr}
\toprule
\(\alpha\) & Ours & SZ-Context & OBPC-Graph & OBPC-Exp3\\
\midrule
1 & 267.61 (4.63)  & 723.53 (9.65) & 1796.83 (14.25) & 2590.00 (9.91)\\
2 & 356.71 (6.61)  & 723.53 (9.65) & 2181.09 (17.08) & 2590.00 (9.91)\\
3 & 419.40 (6.38)  & 723.53 (9.65) & 2307.20 (17.21) & 2590.00 (9.91)\\
4 & 473.49 (5.88)  & 723.53 (9.65) & 2419.22 (11.05) & 2590.00 (9.91)\\
8 & 694.40 (10.27) & 723.53 (9.65) & 2570.83 (10.66) & 2590.00 (9.91)\\
\bottomrule
\end{tabular}
\end{table}

The two sweeps use different seed families: $200$--$209$ for the horizon sweep
and $100$--$109$ for the graph sweep.  This accounts for the small difference
between their common point $T=10^4,\alpha=4$.

The per-context baselines use
$\eta_{\rm Exp3}=\sqrt{2\log K/(KT/M)}$ and
$\eta_{\rm Graph}=\sqrt{2\log K/(\alpha T/M)}$.  SZ-Context and our method use
the fixed asymptotic rules stated in \Cref{sec:experiments}; no parameter is
tuned separately at an individual sweep point.  The Schneider--Zimmert method
and Exp3 both assume standard bandit self-observation.  Accordingly,
SZ-Context and OBPC-Exp3 observe the loss of every played arm.  Relative to the
original strongly observable graph, this grants both baselines the self-loop
that arm 0 otherwise lacks.  These additional observations make the two
standard-bandit baselines applicable and favor them in the comparison; our
method and OBPC-Graph retain the original no-self-loop graph.

\end{document}